\title{An Exact Hypergraph Matching Algorithm for Nuclear Identification in Embryonic \textit{Caenorhabditis elegans}}
\author{
 Andrew Lauziere \\
  Department of Mathematics\\
  University of Maryland, College Park\\
  College Park, MD 20742 \\
  \texttt{lauziere@umd.edu} \\
   \And
 Ryan Christensen \\
  Laboratory of High Resolution Optical Imaging\\
  National Institutes of Health\\
  Bethesda, MD 20892 \\
  \texttt{ryan.christensen@nih.gov} \\
  \And
 Hari Shroff \\
  Laboratory of High Resolution Optical Imaging\\
  National Institutes of Health\\
  Bethesda, MD 20892 \\
  \texttt{hari.shroff@nih.gov} \\
  \And
 Radu Balan \\
  Center for Scientific Computation and Mathematical Modeling \\
  University of Maryland, College Park\\
  College Park, MD 20742 \\
  \texttt{rvbalan@umd.edu} \\}
\date{}
\begin{document}

\maketitle

\begin{abstract}
Point-set matching is a common task in computer vision. Many matching applications feature affine point transformations which can be adequately modeled via lower order objective functions \cite{zhou_factorized_2016,zhang_kergm_2019}. However, point-set matching tasks may require richer detail in order to characterize underlying relationships. Hypergraphs, an extension of traditional graphs, have emerged to more intricately model relationships between points. Existing hypergraphical point-set matching methods are limited to heuristic algorithms which do not easily scale to handle higher degree hypergraphs \cite{duchenne_tensor-based_2010,chertok_efficient_2010,lee_hyper-graph_2011}. Our proposed algorithm, \textit{Exact Hypergraph Matching} (\textit{EHGM}), adapts the classical branch-and-bound paradigm to dynamically identify a globally optimal correspondence between point-sets under an arbitrarily intricate hypergraphical model. \textit{EHGM} is applicable to conservatively sized ($n \leq 20$) point-set matching problems in which relationships between points require increased context to adequately characterize. The methodology is motivated by \textit{Caenorhabditis elegans}, a model organism used frequently in developmental biology and neurobiology \cite{white_structure_1986, sulston_embryonic_1983, chisholm_genetics_2016, rapti_perspective_2020}. The \textit{C. elegans} embryo can be used for cell tracking studies to understand how cell movement drives the development of specific embryonic tissues. However, twitching due to muscular activity in late-stage embryos invalidates traditional cell tracking approaches. The embryo possesses a small set of cells which together act as fiducial markers to approximate the coiled embryo's posture, serving as a frame of reference to track cells of various tissues during late-stage embryogenesis \cite{christensen_untwisting_2015}. Current approaches to posture identification rely on time-consuming manual efforts by trained users which limits the efficiency of subsequent cell tracking. \textit{EHGM} with biologically inspired hypergraphical models identifies posture more accurately than established point-set matching methods, correctly identifying twice as many sampled postures as a heuristic graphical approach.  

\end{abstract}

\section*{Introduction} \label{Intro}

Point-set matching describes the task of finding an alignment between two sets of points. The problem appears in computer vision applications such as point-set registration \cite{leordeanu_spectral_2005}, object recognition \cite{berg_shape_2005}, and multiple object tracking \cite{wen_multiple_2014}. Often the point-sets are modeled via \textit{graphs}, abstract mathematical objects in which points are represented as vertices and edges define relationships between pairs of vertices. 

User defined attributes characterize the vertices and edges, such as coordinate positions or shape descriptions and lengths of chords connecting vertices, respectively. Specified attributes give insight to observable relationships between vertices and allow for structural analyses of graphs. Graph matching is the optimization problem defined by the search for a correspondence of vertices between a pair of attributed graphs. The optimization problem uses binary variables $x_{ij}$ to specify a matching between vertex \textit{i} in the first graph to vertex \textit{j} of the second. The graph matching domain consists of assignment matrices of size $n_1 \times n_2$, for matching graphs of size $n_1$ and $n_2$. 

\begin{equation}
\label{eqn:Pi}
    \mathcal{X} = \{X \in \{0,1\}^{n_1 \times n_2}: \forall j, \sum_{i = 1}^{n_1} x_{ij} \leq 1, \forall i \sum_{j = 1}^{n_2} x_{ij} = 1\}
\end{equation}

The space $\mathcal{X}$ (Eq~\ref{eqn:Pi}) comprises assignment matrices which each describe a one-to-one alignment between nodes of the two graphs. The specification of the graph matching optimization objective function allows for joint assignment costs: i.e., how the assignment of a pair of vertex-to-vertex assignments changes the quality of the match. Let \textbf{C} be an $n_1 \times n_2$ matrix and \textbf{D} be a $n_1 \times n_2 \times n_1 \times n_2$ tensor storing the vertex-to-vertex and edge-to-edge dissimilarities, respectively. The graph matching optimization problem is expressed in Eq~\ref{eqn:qap}, which takes the form of the quadratic assignment problem (QAP). 

\begin{equation}
\begin{aligned}
& \underset{X \in \mathcal{X}}{\text{minimize}} 
& & \sum_{i=1}^{n_1} \sum_{j=1}^{n_2} \sum_{k=1}^{n_1} \sum_{l=1}^{n_2} d_{ijkl} x_{ij} x_{kl} + \sum_{i=1}^{n_1} \sum_{j=1}^{n_2} c_{ij} x_{ij} \\
\end{aligned}
\label{eqn:qap}
\end{equation}

Graphs are limited in their expressive power as edges can only relate pairs of vertices; hypergraphs extend the definition of a graph to include hyperedges which can specify relationships among an arbitrary number of vertices. Hypergraph matching then concerns finding an optimal vertex correspondence between pairs of attributed hypergraphs. The number of vertices aligned by the most comprehensive hyperedge defines the degree of a hypergraph.

Maximum degree hypergraphs with hyperedges composed of all $n_1$ vertices yield the most comprehensive point-set matching function possible. The optimization objective function captures the dissimilarity arising between the matching: $(l_1, l_2, \dots, l_{n_1}) \mapsto (l'_1, l'_2, l'_3, \dots, l'_{n_1})$. Then, for a given assignment matrix $X \in \mathcal{X}$, the hypergraph matching objective can be expressed using $n_1$ dissimilarity tensors of dimension $2, 4, \dots, 2d, \dots, 2n_1$, each measuring dissimilarity between degree $d$ hyperedges, respectively. Define $\mathbf{Z}^{(d)}$ as the tensor mapping the dissimilarity for the degree \textit{d} hyperedges. The hypergraph matching objective is expressed in Eq~\ref{eqn:additive}.

\begin{multline}
    \mathit{f}(X | \mathbf{Z}^{(1)}, \mathbf{Z}^{(2)}, \dots,  \mathbf{Z}^{(n_1)}) = \sum_{l_1=1}^{n_1} \sum_{l'_1=1}^{n_2} \mathbf{Z}^{(1)}_{l_1 l'_1} x_{l_1 l'_1} + \sum_{l_1=1}^{n_1} \sum_{l'_1=1}^{n_2} \sum_{l_2=l_1+1}^{n_1} \sum_{l'_2=1}^{n_2} \mathbf{Z}^{(2)}_{l_1 l'_1 l_2 l'_2} x_{l_1 l'_1} x_{l_2 l'_2}  \\ + \sum_{l_1=1}^{n_1} \sum_{l'_1=1}^{n_2} \sum_{l_2=l_1+1}^{n_1} \sum_{l'_2=1}^{n_2} \sum_{l_3=l_2+1}^{n_1} \sum_{l'_3=1}^{n_2} \mathbf{Z}^{(3)}_{l_1 l'_1 l_2 l'_2 l_3 l'_3} x_{l_1 l'_1} x_{l_2 l'_2} x_{l_3 l'_3}  + ... \\ + \sum_{l_1=1}^{n_1} \sum_{l'_1=1}^{n_2}  ... \sum_{l_{n_1}=l_{n_1-1}+1}^{n_1} \sum_{l'_{n_1}=1}^{n_2} \mathbf{Z}^{(n_1)}_{l_1 l'_1 \dots l_{n_1} l'_{n_1}} x_{l_1 l'_1} \dots x_{l_{n_1} l'_{n_1}}
\end{multline}

Hypergraph matching allows for the modeling of intricate point-set matching problems through high multiplicity assignment objective function formulations. The $\mathbf{Z}^{(d)}$ dissimilarity terms measure degree \textit{d} hyperedge dissimilarity comprising \textit{d} simultaneous vertex assignments. The range in assignment problem objective complexity from \textit{d}=1 to \textit{d}=$n_1$ trades off model capacity for increased computation. The traditional linear assignment problem (\textit{d}=1) is solvable in polynomial time \cite{kuhn_hungarian_1955}, but treats points between sets independently. Existing graphical methods (\textit{d}=2) and hypergraphical methods (\textit{d}$>$2) rely on approximate searches and do not generalize to high degree formulations of Eq~\ref{eqn:additive}. \textit{Exact Hypergraph Matching} (\textit{EHGM}) is able to find globally optimal solutions to hypergraph matching problems of arbitrary degree, allowing for the modeling of intricate point-set matching tasks. 

\subsection*{Related Research}

Finding an exact solution to the QAP is an $\mathcal{NP}$-hard problem. That is, unless \textit{P}=\textit{NP}, there does not exist a polynomial time solution to exactly solve the QAP \cite{sahni_p-complete_1974}. Higher order assignment problems (i.e. hypergraph matching) are also $\mathcal{NP}$-hard as they are at least as hard as the QAP \cite{pardalos_handbook_2013}. As a result, recent methods for graph matching and lower-degree hypergraph matching focus on heuristic solutions which offer no guarantee on performance \cite{zhou_factorized_2016,zhang_kergm_2019, duchenne_tensor-based_2010,lee_hyper-graph_2011,chertok_efficient_2010}. Heuristic hypergraph matching methods are adapted from existing graph matching algorithms. In particular, spectral methods for solving graph matching (Eq~\ref{eqn:qap}) have been extended to solve hypergraph matching. Duchenne et al. \cite{duchenne_tensor-based_2010} adapt Leordeanu's \cite{leordeanu_spectral_2005} work to obtain a rank-1 approximation of the affinity tensor via higher order power iteration. However, calculating affinity tensors ($\mathbf{Z}^{(d)}$ terms) is computationally prohibitive, especially for higher degree hypergraphs due to the exponentially growing number of entries in the tensors. Simplifying assumptions such as super-symmetry and sparseness are used with sampling methods to build large affinity tensors \cite{duchenne_tensor-based_2010, zaslavskiy_path_2009}. Chertok and Keller propose similar methodology to \cite{duchenne_tensor-based_2010}, but instead unfold the affinity tensor and use the leading left singular vector to approximate the adjacency matrix \cite{chertok_efficient_2010}. All such methods operate outside the permutation matrix space. The Hungarian algorithm or similar binarization step is used to yield a valid assignment, e.g. as in \cite{leordeanu_spectral_2005}. 

Exactness allows for a more rigorous analysis of a hypergraphical point-set matching model than is possible using heuristic techniques. The guarantee of a globally optimal correspondence allows an iterative tuning of the underlying model in pursuit of accurate characterization, whereas the output of a heuristic algorithm could be incorrect due either to the stochasticity of the search or to inadequacy of the optimization objective. Branch-and-bound is a paradigm originally developed to exactly solve the the travelling salesman problem, a type of QAP \cite{land_automatic_1960,little_algorithm_1963}. Branch-and-bound methods recursively commit partial assignments and solve successive subproblems within $\mathcal{X}$. The paradigm iteratively partitions the search space while bounding the optimum at each branch. At each step the method prunes branches which cannot contain lead to the optimum. Convergence occurs when only feasible assignments achieving a global optimum remain. The $\mathcal{NP}-$hardness of the QAP implies convergence occurs only after implicit enumeration of $\mathcal{X}$. 

\subsection{Overview of \textit{EHGM} \& Application to \textit{C. elegans}}

\textit{EHGM} deviates from recent graph matching and hypergraph matching methodology as an exact method, guaranteeing convergence to a globally optimal solution (S1:\textit{Convergence of EHGM}). Heuristic hypergraph matching methods approximate the assignment matrix using the dissimilarity tensor \cite{duchenne_tensor-based_2010,chertok_efficient_2010} whereas \textit{EHGM} builds upon the seminal branch-and-bound algorithm \cite{land_automatic_1960}. \textit{EHGM} extends the methodology to branch and prune based upon a given hypergraphical model. A \textit{k}-tuple of nodes at branch \textit{m} are greedily selected while another step encapsulates the full hypergraphical objective upon selection. These changes enable flexibility in altering the hypergraph matching objective, particularly in allowing for high degree hypergraphical modeling. 

\textit{EHGM} is applied to model \textit{posture} in embryonic \textit{Caenorhabditis elegans} (\textit{C. elegans}), a small, free-living roundworm. The nematode features approximately 550 cells upon hatching, including a set of twenty \textit{seam cells} and two associated neuroblasts. The seam cells and neuroblasts form in lateral pairs along the left and right sides of the worm, resulting in eleven pairs upon hatching \cite{sulston_embryonic_1983}. The neuroblasts appear in the final hours of development, just prior to hatching. The pairs of cells are named, posterior to anterior: \textit{T}, \textit{V6}, \textit{V5}, \textit{Q} (neuroblasts), \textit{V4}, \textit{V3}, \textit{V2}, \textit{V1}, \textit{H2}, \textit{H1}, and \textit{H0}. Each pair's left and right cell is named accordingly; for example, \textit{H1L} and \textit{H1R} comprise the \textit{H1} pair.  Fig~\ref{fig:twist_straight_3d}-A depicts center points of seam cell nuclei located in an example image volume as imaged in the eggshell (left) and straightened to reveal the bilateral symmetry in seam cell locations (right). Fig~\ref{fig:twist_straight_3d}-B shows four sequential images of an embryo, five minutes between images.

\begin{figure}[!ht]
\centering
\includegraphics[width=\textwidth]{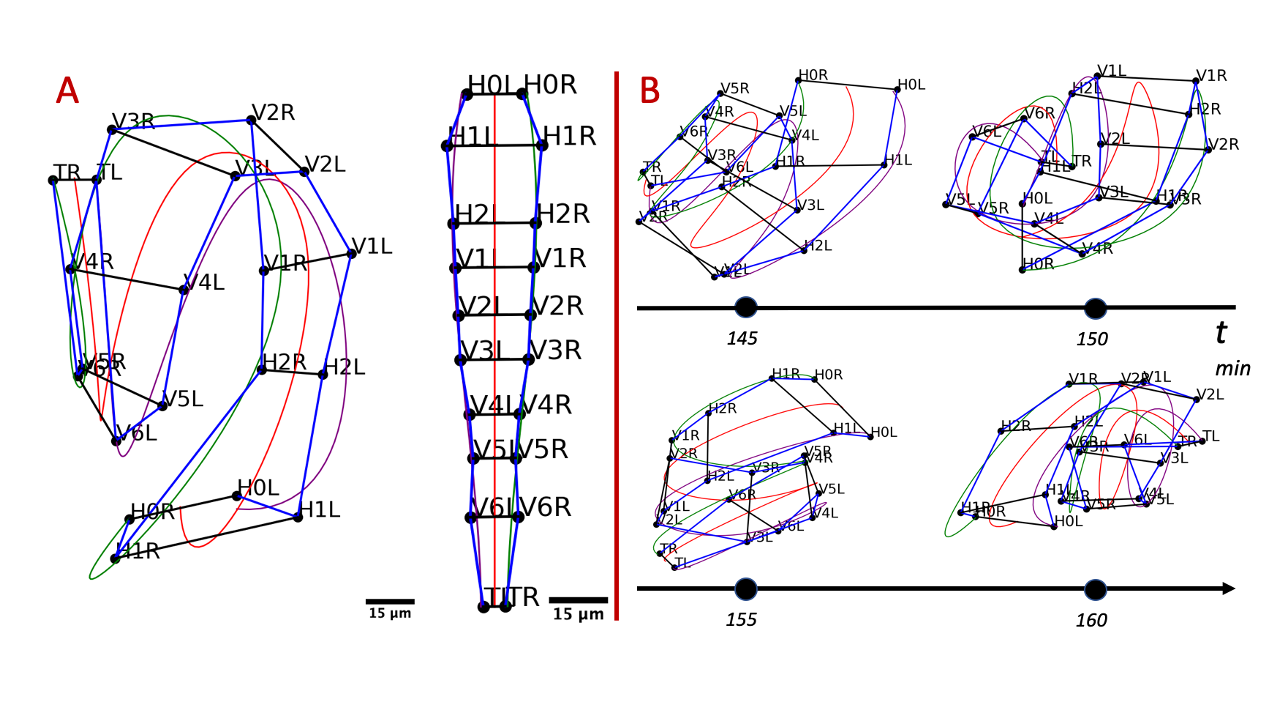}
\caption{\textbf{High spatial resolution, low temporal resolution imaging necessitates posture identification.} A: Manually identified and seam cell nuclei from an imaged \textit{C. elegans} embryo. The cells form in pairs; they are labelled posterior to anterior: \textit{T}, \textit{V6}, ..., \textit{H0}. The identification of all seam cells reveals the embryo's posture. Natural cubic splines through the left and right-side seam cells estimate the coiled body. The left image depicts identified nuclei connected to outline the embryonic worm. The fit splines are used to \textit{untwist} the worm, generating the remapped straightened points in the diagram on the right. B: Labelled nuclear coordinates from a sequence of four images. The embryo repositions in the five minute intervals between images, causing failure of traditional tracking approaches.}
\label{fig:twist_straight_3d}
\end{figure}

We define \textit{posture} as the identification of all seam cells and neuroblasts, which together reveal the shape of the coiled embryo. Posture identification allows for traditional frame-to-frame tracking of imaged cells belonging to various tissues such as the gut, nerve ring, and bands of muscle  \cite{christensen_untwisting_2015}. Images are captured in five minute intervals (Fig~\ref{fig:twist_straight_3d}-B) in order to achieve necessary resolution to track cells of other tissues without disturbing embryo development. Fig~\ref{fig:untwist_track}-A highlights muscle cell nuclei (red dots) with the identified seam cells to contextualize the embryo's positioning. The posture is used to remap the muscle cells such that traditional cell tracking approaches can be applied in the late-stage embryo (Fig~\ref{fig:untwist_track}-B). Fig~\ref{fig:untwist_track}-C depicts the cell remapping process \cite{christensen_untwisting_2015}. The muscle cells are remapped according to splines fitted to the posture. The \textit{untwisted} cell positions are then tracked frame-to-frame (Fig~\ref{fig:untwist_track}-D).

\begin{figure}[!ht]
\centering
\includegraphics[width=\textwidth]{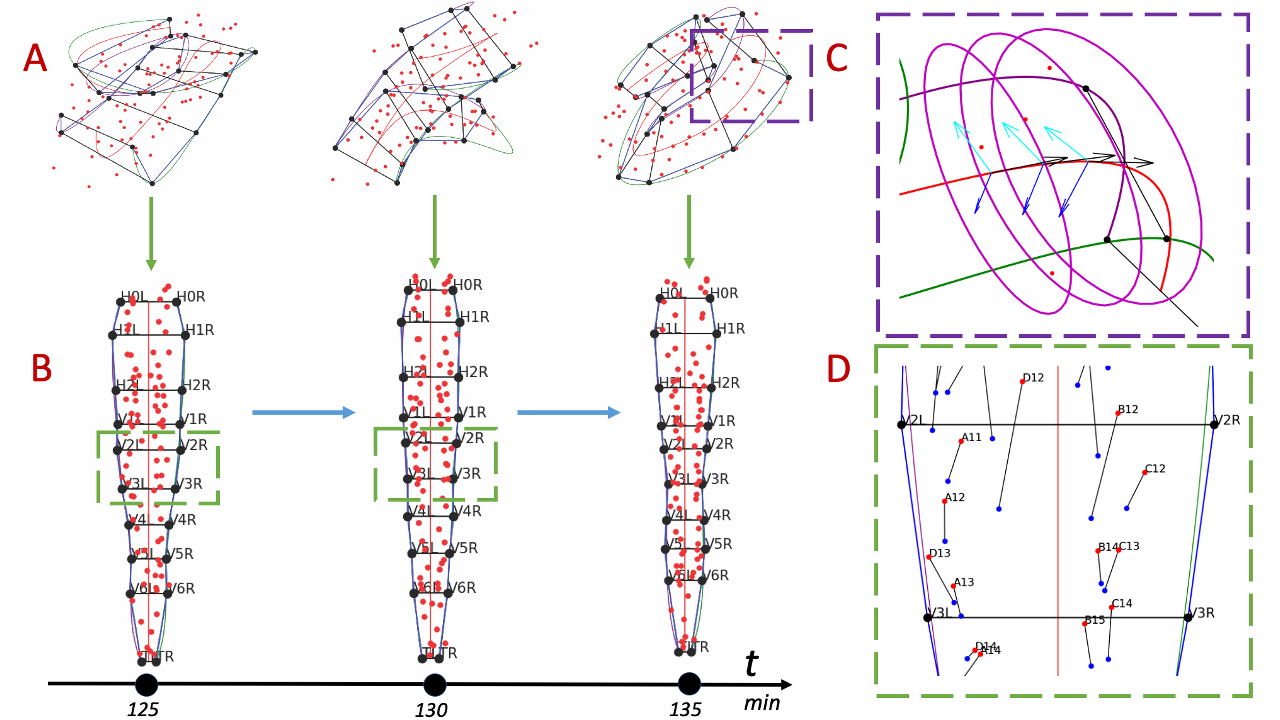}
\caption{\textbf{Posture identification allows the tracking of other cells during late-stage embrygenesis.} A: Seam cell nuclei coordinates (black) and muscle nuclei coordinates (red) in a sequence of three sequential volumetric images. The untwisting process (green arrows) uses the seam cells to remap muscle coordinates to a common frame of reference. B: The remapped muscle nuclei are tracked frame-to-frame (blue arrows). C: A higher magnification view from the right coordinate plot of A. The left, right, and midpoint splines are used to create a change of basis defined by the tangent (black), normal (blue), and binormal (cyan) vectors. Ellipses are inscribed along the tangent of the midpoint spline, approximating the skin of the coiled embryo. D: A portion of the left (red) and center (blue) remapped muscle coordinates. Black lines connect the coordinates, frame-to-frame.}
\label{fig:untwist_track}
\end{figure}

Current methods for posture identification rely on trained users to manually annotate the imaged nuclei using a 3D rendering tool \cite{mcauliffe_medical_2001}. The process takes several minutes per image volume and must be performed on approximately 100 image volumes per embryo \cite{christensen_untwisting_2015}. Manual annotation strategies motivated us to develop \textit{EHGM}, as established methods for point-set matching fail to adequately capture the relationships between seam cells throughout myriad twists and deformations of the developing embryo. Fig~\ref{fig:model_prog-MIPAV} depicts manually identified postures in the first two successive image volumes of Fig~\ref{fig:twist_straight_3d}-B. Manual identification is performed in Medical Imaging, Processing, Analysis and Visualization (MIPAV), a 3D rendering program used for manual annotation \cite{mcauliffe_medical_2001}. 

\begin{figure}[!ht]
\centering
\includegraphics[width=\textwidth]{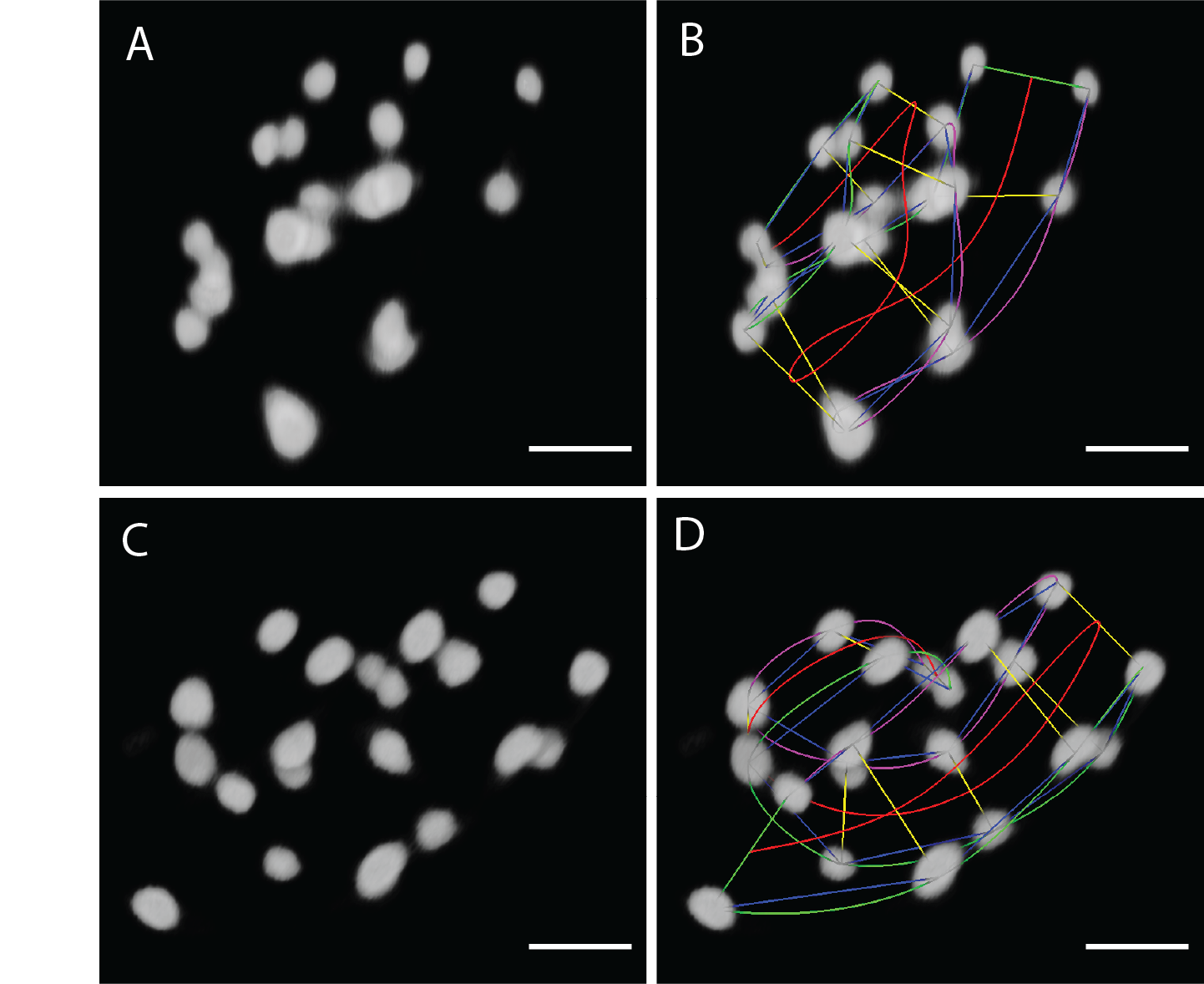}
\caption{\textbf{Manual posture identification in two successive image volumes of Fig~\ref{fig:twist_straight_3d}-B using MIPAV.} The 20 fluorescently imaged seam cell nuclei rendered in two successive image volumes. Scale bar: 10 $\mu m$. A \& B: Seam cell nuclei appearing in two successive image volumes visualized in MIPAV. The five minute interval allows the embryo to reposition between images, yielding entirely different postures. C \& D: Manual seam cell identification by trained users reveals the posture. The curved lines are cubic splines as described in Fig~\ref{fig:untwist_track}-C.}
\label{fig:model_prog-MIPAV}
\end{figure}

\textit{EHGM} uses hypergraphical models comprising biologically driven geometric features to more accurately identify posture than established graphical methods. The limited expressive power of graphical models hinders accurate seam cell identification; graphical models accurately identify posture in 27\% of samples compared to 56\% using a hypergraphical model. User labelling of the posterior-most seam cell nuclei improves the success of hypergraph matching to correctly identifying all nuclei in 77\% of samples. The improved accuracy in posture identification attributed to high-degree hypergraphical modeling solved via \textit{EHGM} paves a path toward automatic posture identification while presenting a general framework for approaching similarly challenging point-set matching tasks. 

\section*{Results} \label{Results}

\subsection{Posture Identification Models}

Posture was predicted via \textit{EHGM} according to three models: a graphical model, denoted \textit{Sides}, and two hypergraphical models. The two hypergraphical models, \textit{Pairs} and \textit{Posture}, showcase \textit{EHGM} as existing algorithms cannot find solutions under such high degree hypergraphs. Each of the three models incrementally use higher degree terms to describe posture. \textit{Sides} follows the form of Eq~\ref{eqn:qap} and leverages pairwise assignments to calculate lengths and widths of portions of the embryo. \textit{Pairs} uses degrees four and six hyperedges to better model local regions of the embryo than is possible with graphical methods which rely on pairwise relationships. \textit{Posture} further demonstrates the capabilities of \textit{EHGM} by including a degree $n_1$ hyperedge to maximize context in evaluating a hypothesized posture. Geometric features such as pair-to-pair rotation angles and left-right flexion angles were developed to more accurately measure and compare posture hypotheses. The calculation of each angle or distance requires identification of multiple seam cells in tandem to calculate, necessitating the use of hyperedges. 

Fig~\ref{fig:model_prog} demonstrates four types of models applied to perform posture identification on the first two sampled images in Fig~\ref{fig:twist_straight_3d}-B. Linear models (Fig~\ref{fig:model_prog}-A \& Fig~\ref{fig:model_prog}-B) are ill-equipped to identify posture due to the repositioning of the embryo between successive images, so linear models are not evaluated on sampled data. The graphical model \textit{Sides} (Fig~\ref{fig:model_prog}-C \& Fig~\ref{fig:model_prog}-D) associates local seam cells via edges (purple). Edge-wise features such as lengths and widths vary if the embryo coils tightly, but are otherwise approximately static frame-to-frame. However, the similarity in these measurements throughout the embryo yields a model incapable of differentiating portions of the embryo. Hypergraphical models \textit{Pairs} (Fig~\ref{fig:model_prog}-E \& Fig~\ref{fig:model_prog}-F) and \textit{Posture} (Fig~\ref{fig:model_prog}-G \& Fig~\ref{fig:model_prog}-H) use aforementioned hyperedges to more strongly characterize embryonic posture.

\begin{figure}[!ht]
\centering
\includegraphics[width=\textwidth]{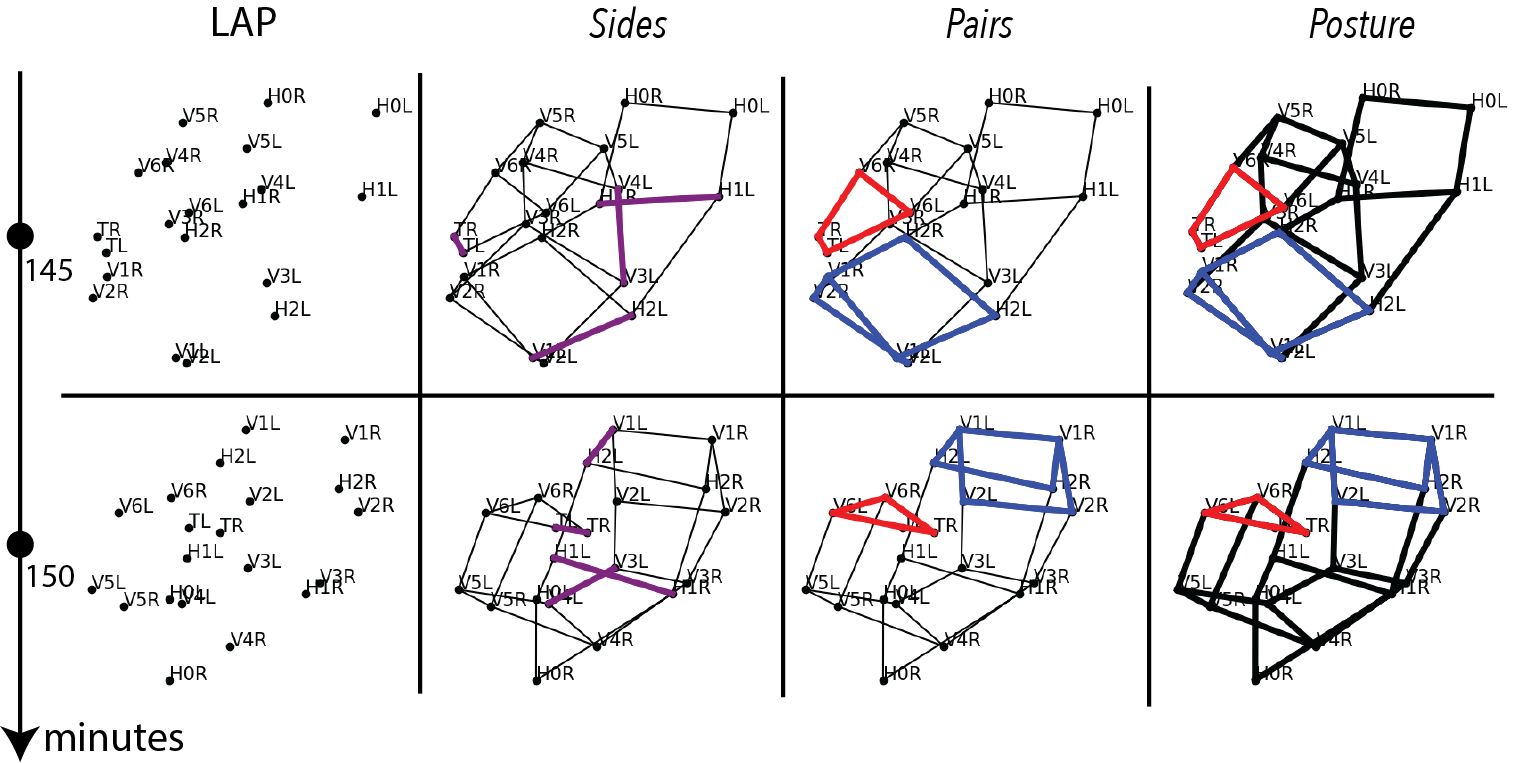}
\caption{\textbf{Posture identification applied to the two successive images in Fig~\ref{fig:model_prog-MIPAV} according to a series of increasingly intricate models.} The embryo repositions between images. A \& B: Linear models (LAP) cannot quantify relationships between seam cells; posture identification is impossible without context of neighboring cell identities. C \& D: A graphical model (\textit{Sides}) specifies edges (purple) between pairs of seam cell nuclei. Edge lengths are relatively static frame-to-frame, but the similarity of edge lengths throughout the embryo causes the edges to have a weak signal in identifying seam cells. E \& F: The \textit{Pairs} model uses degrees four (red) and six (blue) hyperedges to model a greater local context than is possible in a graphical model. G \& H: The \textit{Posture} model extends the \textit{Pairs} model to use a degree $n_1$ (black) hyperedge to evaluate all seam cell assignments jointly.}
\label{fig:model_prog}
\end{figure}

\subsection{Posture Identification Accuracy}

Annotators curated a dataset of seam cell nuclei center coordinates from 16 imaged embryos. Each imaged embryo yielded approximately 80 image volumes for a total of \textit{N}=1264 labelled seam cell nuclei coordinate sets. Homogeneity in \textit{C. elegans} embryo development allowed use of samples spanning multiple embryos to fit models via a leave-one-out approach (S1:\textit{Model Fitting}, S1: \textit{Posture Modeling}). \textit{EHGM} allows for known correspondences, henceforth referred to as \textit{seeds}, to be given as input prior to search initialization. The algorithm was evaluated both in a traditional point-set matching scenario given no \textit{a priori} information, and in a series of seeded simulations. Seeded trials assumed incrementally more pairs given sequentially from the tail pair, \textit{T}, to the fourth pair, \textit{V4} (or \textit{Q} for $n_1$=22 samples). \textit{KerGM} \cite{zhang_kergm_2019}, a leading algorithm for heuristic graph matching, was applied to posture identification. The algorithm used the same connectivity matrix as \textit{Sides}, but processed results frame-to-frame serially, relying on the correct posture identification at the prior image as input to search.

\textit{EHGM} is able to store complete assignments encountered during the search as it compares against the current solution at the final branch. This allowed for an analysis of the similarity between cost minimizing posture hypotheses and progressively higher cost solutions encountered during search. The top \textit{x} accuracy describes the percentage of all \textit{N} samples in which \textit{EHGM} returned the correct posture in the \textit{x} lowest cost solutions; i.e. the top 1 accuracy describes the percentage of samples in which the correct posture was returned as the cost minimizing posture, and the top 3 accuracy is the percentage of samples in which the correct assignment was among 3 lowest cost posture hypotheses returned by the search. Top \textit{x} accuracies are reported alongside the median runtime and the median cost ratio. The cost ratio is defined as the ratio of the correct posture's objective to the cost minimizing posture's objective. A cost ratio greater than one implies the objective of the hypothesized posture is lower than that of the correct posture, suggesting the model is not aptly characterizing posture as an incorrect posture hypothesis was preferred by the model. 

Table \ref{fig:auto_full} shows the percentage of all \textit{N} samples in which the correct posture (correct identification of \textit{all} seam cells) was returned as the minimizer according to \textit{KerGM} and each of the models solved via \textit{EHGM}: \textit{Sides}, \textit{Pairs}, and \textit{Posture}. \textit{KerGM} identified 27\% of sampled postures correctly, outperforming \textit{Sides} (10\%). \textit{Pairs} and \textit{Posture} more effectively identified posture with 52\% and 56\% top 1 accuracies, respectively. Both hypergraphical models also reported a median cost ratio of 1.00, compared to 1.28 of \textit{Sides}, suggesting the hypergraphical representations of coiled posture provided enhanced discriminatory power across samples. The hypergraphical models demonstrated small trade-offs between accuracy and runtime. The \textit{Posture} model's $n_1$ degree hypergraphical features improved accuracy over \textit{Pairs}, 56\% to 52\%, in exchange for longer median runtime, 60 minutes to 43 minutes. Differences between the top 1 and top 3 accuracies reflect the challenge in posture identification. The optimums under the \textit{Pairs} and \textit{Posture} models were often similar to those of similar posture hypotheses. Notably, the \textit{Posture} model returned the correct posture in the top 3 hypotheses in approximately 67\% of samples, an approximate 20\% increase in relative accuracy over the top 1 percentage, 56\%. 

\begin{table}[!ht]
\centering

\begin{tabular}{|l|c|c|c|c|c|c|c|c|c|c|c|}
\toprule
{} &  Top 1 (\%) &  Top 2 (\%) &  Top 3 (\%)  &  Top 5 (\%)  &  Top 10 (\%)  &  R (minutes) &  CR \\
\midrule
\textit{KerGM} &    27 &     27 &     27 &     27 &       27 &   .01 &                \\ \hline
\textit{Sides}   &    10 &    14 &    15 &    16 &      16 &  5.97 &               1.28 \\ \hline
\textit{Pairs} &    52 &    60 &    63 &    65 &      65 & 43.22 &               1.00 \\ \hline
\textit{Posture}   &    56 &    65 &    67 &    68 &      68 & 60.35 &               1.00 \\ 
\bottomrule
\end{tabular}

\caption{\textbf{Hypergraphical model \textit{Posture} achieves highest accuracy.} Posture identification accuracies across all \textit{N}=1264 samples. \textit{KerGM} is compared to proposed models. The first columns list the top \textit{x} accuracy as a percentage of samples. The column titled \textit{R} shows the median runtime of each model in minutes. \textit{CR} reports the median cost ratio, defined as the ratio of the correct posture cost to the returned posture cost.}
\label{fig:auto_full}
\end{table}

Posture identification results were stratified by the presence of the \textit{Q} neuroblasts; 875 of the 1264 samples contain only the seam cells while the remaining 389 samples are mature enough to have the \textit{Q} neuroblasts. Table \ref{fig:auto_20_22} depicts the findings presented in Table \ref{fig:auto_full} split by \textit{Q} neuroblast presence. \textit{KerGM} and all models solved via \textit{EHGM} achieved a higher accuracy on \textit{Q} samples. Notably, the \textit{Posture} model's top 3 accuracy is higher on the \textit{Q} samples (82\%) than the pre-\textit{Q} samples (60\%). The extra pair of coordinates provided substantial context, further defining the coiled shape and helping to penalize incorrect postures. 

\begin{table}[!ht]
\centering

\begin{tabular}{|l|c|c|c|c|c|c|c|c|c|c|c|}
\toprule
{} &  Top 1 (\%) &  Top 2 (\%) &  Top 3 (\%)  &  Top 5 (\%)  &  Top 10 (\%)  &  R (minutes) &  CR \\
\midrule
\textit{KerGM} &     25 &      25 &      25 &      25 &       25 &     .01 &       \\ \hline
\textit{Sides}   &      7 &     10 &     11 &     12 &      12 &    4.81 &  1.36 \\ \hline
\textit{Pairs} &     44 &     51 &     55 &     57 &      58 &   34.25 &  1.04 \\ \hline
\textit{Posture}   &     48 &     57 &     60 &     61 &      62 &   51.12 &  1.00 \\ 
\bottomrule
\end{tabular}

\begin{tabular}{|l|c|c|c|c|c|c|c|c|c|c|c|}
\toprule
{} &  Top 1 (\%) &  Top 2 (\%) &  Top 3 (\%)  &  Top 5 (\%)  &  Top 10 (\%)  &  R (minutes) &  CR \\
\midrule
\textit{KerGM} &     35 &      35 &      35 &      35 &       35 &        .01 & \\ \hline
\textit{Sides}   &     19 &     25 &     26 &     26 &      26 &       9.66 &  1.16 \\ \hline
\textit{Pairs} &     71 &     80 &     82 &     82 &      82 &      56.58 &  1.00 \\ \hline
\textit{Posture}   &     72 &     81 &     82 &     83 &      83 &      72.60 &  1.00 \\
\bottomrule
\end{tabular}
\caption{\textbf{Hypergraphical models leverage \textit{Q} neuroblasts to identify posture.} The samples are split according to the absence (top) or presence (bottom) of the \textit{Q} neuroblasts, which form in the last two hours of development. There are $875$ $n_1$=20 cell samples and $389$ $n_1$=22 \textit{Q} samples. Reported methods more accurately identify embryonic posture in the \textit{Q} samples, suggesting the increased continuity along the body of the embryo allows for more consistent posture identification.}
\label{fig:auto_20_22}
\end{table}

Seeded experiments specifying nuclear identities provided \textit{a priori} information starting with the tail pair, and incrementally identified more pairs in the posterior region. Each experiment was given five minutes of maximum runtime; a semi-automated solution requiring more runtime was deemed infeasible. Top 1 and top 3 accuracy percentages are reported by \textit{EHGM} models and number of seeded pairs in Table \ref{fig:backward_1_3}. Seeding yielded decreasing marginal improvements to accuracy and runtime. Fig~\ref{fig:Pairs_P-F_Q_comparison} depicts top 1 accuracies and median runtimes across seeded experiments for the \textit{Pairs} and \textit{Posture} models split by \textit{Q} pair labelling. Particularly, seeding the first two pairs, \textit{T} and \textit{V6}, greatly reduces the median runtime while also netting the largest gains in top 1 accuracy, partially attributable to \textit{EHGM} converging in the given timeframe. 

\begin{table}[!ht]
\centering

\begin{tabular}{|l|c|c|c|c|c||c|c|c|c|c|}
\toprule
 & \multicolumn{5}{c}{Top 1 (\%)} &    \multicolumn{5}{c}{Top 3 (\%)}  \\
{} &  None &   T  &  T-V6  &  T-V5  &  T-V4  &  None & T  &  T-V6  &  T-V5  &  T-V4   \\
\midrule
\textit{Sides}   &     9 &   10 &    22 &    29 &    37 &    13 &   15 &    27 &    35 &    43  \\ \hline
\textit{Pairs} &    34 &   49 &    72 &    79 &    84 &    38 &   54 &    77 &    83 &    87  \\ \hline
\textit{Posture}   &    25 &   36 &    68 &    79 &    84 &    27 &   39 &    73 &    84 &    87  \\
\bottomrule
\end{tabular}

\caption{\textbf{Seeding posterior pair identities promotes accurate posture identification and reduces runtime.} Top 1 and top 3 seeded posture identification accuracies across all samples. All trials had a five-minute maximum runtime. The rows again correspond to each model. Columns specify which pairs were given as seeds prior to search. The \textit{None} columns recreate the original no information task. The subsequent columns specify which pairs are correctly identified prior to search.}
\label{fig:backward_1_3}
\end{table}

\begin{figure}[!ht]
\centering
\includegraphics[width=\textwidth]{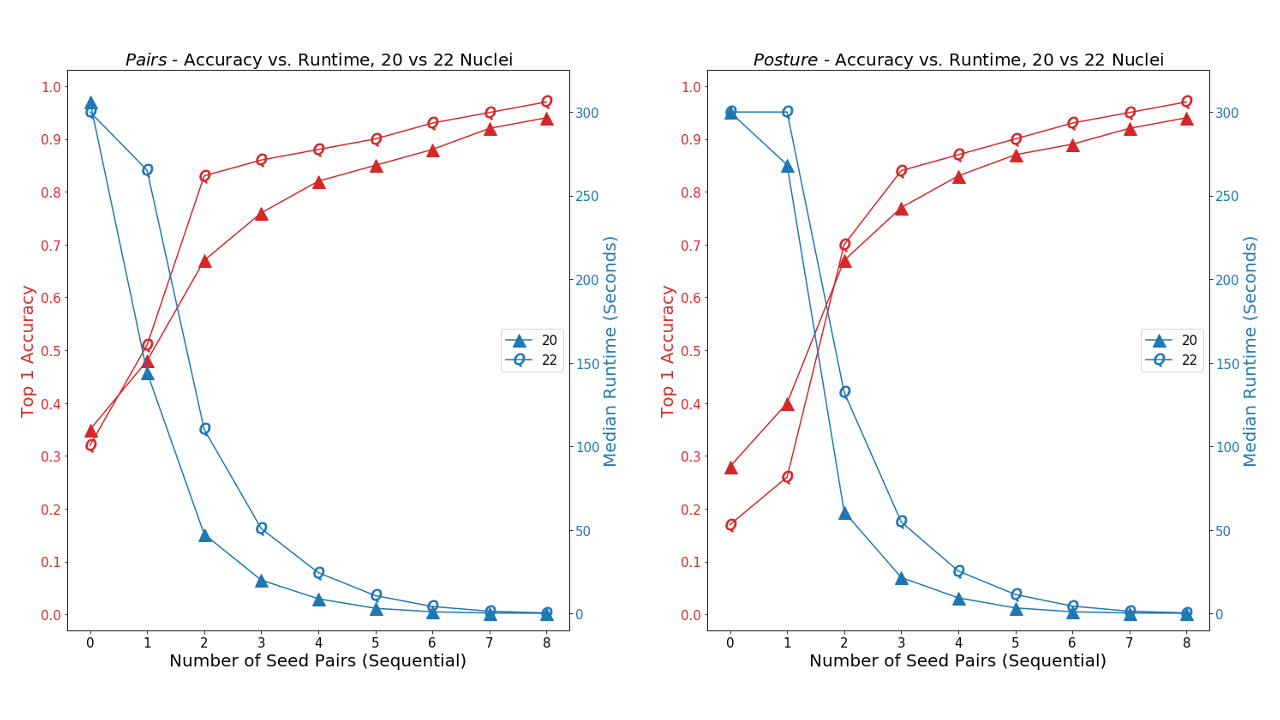}
\caption{\textbf{Evaluating the \textit{Pairs} and \textit{Posture} models as seam cell identities were seeded.} The \textit{Pairs} and \textit{Posture} models top 1 accuracies and median runtimes by Q pair labelling. Posterior pair seeding drastically improved top 1 accuracy and reduced runtime when applying both models. \textit{Q} pair samples required more runtime ($n_1$=22 as opposed to $n_1$=20), but the added context improved posture identification accuracy. The majority of samples converged within 5 minutes when seeded with the \textit{T} and \textit{V6} pairs of nuclei.}
\label{fig:Pairs_P-F_Q_comparison}
\end{figure}

\section*{Discussion} \label{Disc}

We have presented \textit{EHGM} as a dynamic and effective tool for intricate point-set matching tasks. The hypergraph matching algorithm provides a method in which to gauge the efficacy of modeling point correspondences in conservatively-sized problems; problems featuring larger numbers of points likely contain the context required to match adequately via lower degree models. For example, postures in samples containing \textit{Q} nuclei were more accurately identified across models, but the largest marginal gain in accuracy came from \textit{Sides} (\textit{d}=2) to \textit{Pairs} (\textit{d}=4,6). The results suggest that added context throughout the embryo would further improve posture identification accuracy, reducing the reliance on higher degree (and thus more computationally expensive) hypergraphical objective function formulations. \textit{EHGM} specifically addresses a gap in literature concerning challenging point-set matching applications in which domain-specific features lead to rigorously testable models. Seeding allows a wider range of problems to be approached, and mitigates the computational expense of the algorithm for scenarios featuring larger point-sets. 

Posture identification in embryonic \textit{C. elegans} is a challenging problem benefiting from high degree hypergraphical modeling. \textit{EHGM} equipped with biologically inspired hypergraphical models led to substantial improvement in posture identification. The top 1 accuracy doubled from 27\% with a graphical model to 56\% via the \textit{Posture} model (Table~\ref{fig:auto_full}). The top 3 accuracy rate improved to 67\%, highlighting the challenge in precisely specifying the coiled embryo due to the similarity of competing posture hypotheses. The presence of \textit{Q} neuroblasts further contributed to accurate posture identification. The added context empowered the \textit{Posture} model to identify the correct posture in 82\% of \textit{Q} samples (Table~\ref{fig:auto_20_22}.

The top \textit{x} percentage accuracy metric reflects the need to correctly identify \textit{all} seam cells in order to recover the underlying posture, but does not distinguish between hypotheses that are incorrect due to one cell identity swap or a more systemic modeling inadequacy. A qualitative analysis highlighted a few themes among incorrectly predicted postures. The foremost errors concern the tail pair cells, \textit{TL} and \textit{TR}; spurious identifications occurred when the tail pair coiled against another the body of the embryo, causing one tail cell identity to be interchanged with a cell of a nearby body pair. The variance of feature measurements in the posterior region resulted in similar costs for postures with minor differences about the posterior region. 

Pair seeding allows for the strengths of \textit{EHGM} to compensate for the most challenging aspect of posture identification. The posterior region of the embryonic worm is especially flexible and contributes to the majority of reported errors. Feature engineering stands to create hypergraphical models more capable of reliable posture identification, particularly in contextualizing the posterior region. The method and application outline a protocol for challenging point-set matching tasks. 

\section*{Methods}

\subsection*{\textit{Exact Hypergraph Matching}} \label{EHGM}

\textit{EHGM} extends the branch-and-bound paradigm to exactly solve hypergraph matching. The algorithm performs the search in the permutation space $\mathcal{X}$ subject to a given branch size \textit{k} which specifies the number of vertices assigned at each branch. A size $n_1$ hypergraph will require $M \vcentcolon = \frac{n_1}{k}$ branch steps, where branch \textit{m} concerns the assignment of vertices $((m-1)k+1, (m-1)k+2, \dots, mk)$; vertices $1, 2, \dots, mk$ have been assigned upon completion of the $m^{th}$ branch. The set $\mathbf{P}$ contains all possible permutations of the indices of the unordered point set, $|\mathbf{P}| = \frac{n_2!}{(n_2 - k)!}$. $\mathbf{P}$ is incrementally subset into queues $\mathbf{Q}_m \subseteq \mathbf{P}$ at branches $m=1, 2, \dots, M$ at each branching. The queue $\mathbf{Q}_m$ is subset according to both a pruning rule which eliminates permutations leading to a suboptimal solution as well as the one-to-one constraints of $\mathcal{X}$. The search converges to a global optimum upon the implicit enumeration of $\mathbf{Q}_1 = \mathbf{P}$. 

The objective function \textit{f} is further stratified according to the branch size \textit{k}. Lower degree ($d \leq 2k$) hyperedge dissimilarity tensors are computed prior to search. Branches comprising \textit{k}-tuples of vertices are partially assigned in a greedy manner according these lower degree hyperedge dissimilarities via the selection rule \textit{H}. Later branches accrue higher degree ($d > 2k$) hyperedge dissimilarities which are calculated at time of branching; the intent of the method is to rely on lower degree terms to steer the search towards an optimum in effort to minimize the number of branches explored. The aggregation rule \textit{I} accrues higher degree hyperedge dissimilarity terms upon branching, further guiding the pruning step and ensuring the complete specification of the objective $f$. 

The branching and selection rules are designed to reduce computation performed throughout the search. A partial assignment at branch $m$: $\mathbf{K}_m = (l'_{(m-1)k+1}, l'_{(m-1)k+2}, \dots, l'_{mk}) \in \mathbf{Q}_m$ is selected via precomputed lower degree hyperedge dissimilarity tensors $\mathbf{Z}^{(1)}, \dots, \mathbf{Z}^{(2k)}$. A larger branch size \textit{k} results in a selection rule with larger scope of the optimization landscape, better equipped to place optimal branches earlier in each queue $\mathbf{Q}_m$ at time of branching. However, computing the lower degree dissimilarity tensors prior to search can be prohibitively expensive for larger point-sets. 

\subsection*{Selection \& Aggregation} \label{S&A}

The first branch permutation $\mathbf{K}_1 = (l'_1, l'_2, \dots, l'_k) \in \mathbf{Q}_1 = \mathbf{P}$ assigns vertices $(l_1, l_2, \dots, l_k)$ to points $(l'_1, l'_2, \dots, l'_k)$ according to the initial branch selection rule $H_{1}$. Eq~\ref{eqn:H_1} defines a cost given dissimilarity tensors $\mathbf{Z}^{(1)}, \mathbf{Z}^{(2)}, \dots \mathbf{Z}^{(k)}$ according to a permutation $\mathbf{K}_1$. The \textit{k} pairs of constraints given by the branch \textit{m} and permutation of point indices $\mathbf{K}_m$: $\{(l_1, l'_1), \dots, (l_k, l'_k)\}$ enables a simplification in the objective formulation. 

\begin{multline}
\label{eqn:H_1}
    H_{1}(\mathbf{K}_1 | \mathbf{Z}^{(1)}, \mathbf{Z}^{(2)}, ..., \mathbf{Z}^{(k)}) \vcentcolon = \\ \sum_{i_1=1}^{k} \mathbf{Z}^{(1)}_{l_{i_1} l'_{i_1}} + \sum_{i_1=1}^{k} \sum_{i_2=i_1+1}^{k} \mathbf{Z}^{(2)}_{l_{i_1} l'_{i_1} l_{i_2} l'_{i_2}} + ... + \sum_{i_1=1}^{k} \sum_{i_2=i_1+1}^{k} \dots \sum_{i_k=i_{k-1}+1}^{k} \mathbf{Z}^{(k)}_{l_1 l'_{i_1} l_{i_2} l'_{i_2} \dots l_{i_k} l'_{i_k}}
\end{multline}

Subsequent branches $m = 2, 3, \dots M$ then use the general selection rule $H_m$ to order the permutations of the $m^{th}$ branch: $\mathbf{K}_m = (l'_{(m-1)k+1}, l'_{(m-1)k+2}, \dots l'_{mk}) \in \mathbf{Q}_m$. Branch $\mathbf{K}_m$ incurs a selection rule cost $H_m$ according to Eq~\ref{eqn:H} comprising lower degree hyperedge dissimilarities for assignments both within branch $m$ and the assignments between branches $1, 2, \dots, m-1$ and branch $m$. The partial assignment constraints $\mathbf{K}_m$ allow further simplification of notation; the reversed order of summation indices satisfies the criteria that only hyperedge dissimilarities pertaining to branch $m$ assignments are considered via $H_m$.

\begin{multline}
\label{eqn:H}
    H_{m}(\mathbf{K}_{m} | \mathbf{K}_{1}, ..., \mathbf{K}_{m-1}, \mathbf{Z}^{(1)}, ..., \mathbf{Z}^{(2k)}) \vcentcolon = \\
    \sum_{i_1=(m-1)k+1}^{mk} \mathbf{Z}^{(1)}_{l_{i_1} l'_{i_1}} + \sum_{i_2=(m-1)k+1}^{mk} \sum_{i_1=1}^{i_2-1} \mathbf{Z}^{(2)}_{l_{i_1} l'_{i_1} l_{i_2} l'_{i_2}} \\ + \sum_{i_3=(m-1)k+1}^{mk} \sum_{i_2=1}^{i_3-1} \sum_{i_1=1}^{i_2-1} \mathbf{Z}^{(3)}_{l_{i_1} l'_{i_1} l_{i_2} l'_{i_2} l_{i_3} l'_{i_3}} + ... +  \sum_{i_{2k} = (m-1)k+1}^{mk} \sum_{i_{2k-1} = 1}^{i_{2k}-1}  \dots \sum_{i_1 = 1}^{i_2-1} \mathbf{Z}^{(2k)}_{l_{i_1} l'_{i_1} \dots l_{i_{2k}} l'_{i_{2k}}}
\end{multline}

The greedy selection rule orders queues $\mathbf{Q}_m$, but does not account for higher degree ($2k < d \leq n_1$) hyperedge dissimilarities. Precomputing higher degree dissimilarity tensors can be both computationally expensive, and inefficient as ideally only a small percentage of combinations are queried throughout the search. The aggregation rule $I_m$, $m = 3, 4, \dots, M$ measures the dissimilarity attributable to higher degree ($2k < d \leq mk$) hyperedges accessible due to branch $m$ partial assignments. The aggregation rule updates the cost of branch $\mathbf{K}_m$ assignments, further informing the pruning step to subset the next queue $\mathbf{Q}_{m+1}$. The greedy selection rule $H_m$ in tandem with the aggregation rule $I_m$ aim to minimize the total computation performed in finding an optimum. The definition $I_m$ follows from the general selection rule $H_m$, but is applied to the higher degree hyperedge dissimilarities. The aggregation rule $I_m$ (Eq~\ref{eqn:I}) can be expressed as the degree $d$ dissimilarities calculable upon assignments of branch $m$ assignments for degrees $2k < d \leq mk$. 

\begin{equation}
\label{eqn:I}
    I_m(\mathbf{K}_{m} | \mathbf{K}_{1}, \mathbf{K}_{2}, \dots, \mathbf{K}_{m-1}, \mathbf{Z}^{(2k+1)}, \dots, \mathbf{Z}^{(mk)}) \vcentcolon = \sum_{d=2k+1}^{mk} \sum_{i_d=(m-1)k+1}^{mk} \sum_{i_{d-1}=1}^{i_d-1} ... \sum_{i_1=1}^{i_2-1} \mathbf{Z}^{(d)}_{l_{i_1} l'_{i_1} ... l_{i_d} l'_{i_d}}
\end{equation}

The $m^{th}$ branch allows for hyperedge dissimilarities up to degree \textit{mk} concerning the first \textit{mk} assignments. The $M^{th}$ branch yields a complete assignment, allowing the evaluation of maximum degree $n_1$ hyperedge dissimilarities. The partitioning and further regrouping of each $H_m$ and $I_m$ as defined fully accounts for the objective \textit{f} while allowing efficient computation during the search (S1:\textit{Hypergraphical Objective Decomposition}, S1:\textit{Convergence of EHGM}).

\subsection*{Posture Identification in Embryonic \textit{C. elegans}} \label{PI}

\textit{Caenorhabditis elegans} (\textit{C. elegans}) is a small, free-living nematode found across the world. The worm is often studied as a model of nervous system development due to its relative simplicity \cite{white_structure_1986, rapti_perspective_2020}. The adult worm features only 302 neurons, the morphology and synaptic patterning of which have been determined via electron microscopy \cite{white_structure_1986}. The complete embryonic cell lineage has also been determined \cite{sulston_embryonic_1983}; methods and technology have been developed to allow study of cell position and tissue development in the embryo \cite{bao_automated_2006, boyle_acetree_2006, santella_wormguides_2015, mace_high-fidelity_2013, cao_establishment_2020, wang_high-content_2019}. Systems-level studies of these processes may be able to discover larger-scale principles underlying developmental events.

The embryo features a set of twenty \textit{seam cells} and two associated neuroblasts. The seam cells and neuroblasts together describe anatomical structure in the coiled embryo, acting as a type of ``skeleton'' outlining its body. Identification of the seam cells and neuroblasts defines the embryo's posture. Fluorescent proteins are used to label cell nuclei, including the seam cell nuclei so that they may be visualized during imaging, e.g. with light sheet microscopy \cite{wu_spatially_2013}. Volumetric images are captured at five minute intervals in order to capture subcellular resolution without damaging the worm's development \cite{christensen_untwisting_2015}. Seam cell nuclei appear in the fluorescent images as homogeneous spheroids. Their positions relative to other nuclei and other salient cues present in the image volumes comprise the information that trained users employ to manually identify seam cells. Fig~\ref{fig:mipav} shows the two rendered fluorescent images from Fig~\ref{fig:twist_straight_3d}-A in Medical Image Processing, Analysis and Visualization (MIPAV), a 3D rendering tool \cite{mcauliffe_medical_2001}. The interface is used to annotate both seam cell nuclei and track remapped nuclei, as in Fig~\ref{fig:model_prog-MIPAV} \cite{christensen_untwisting_2015}. 

\begin{figure}[!ht]
\centering
\includegraphics[width=\textwidth]{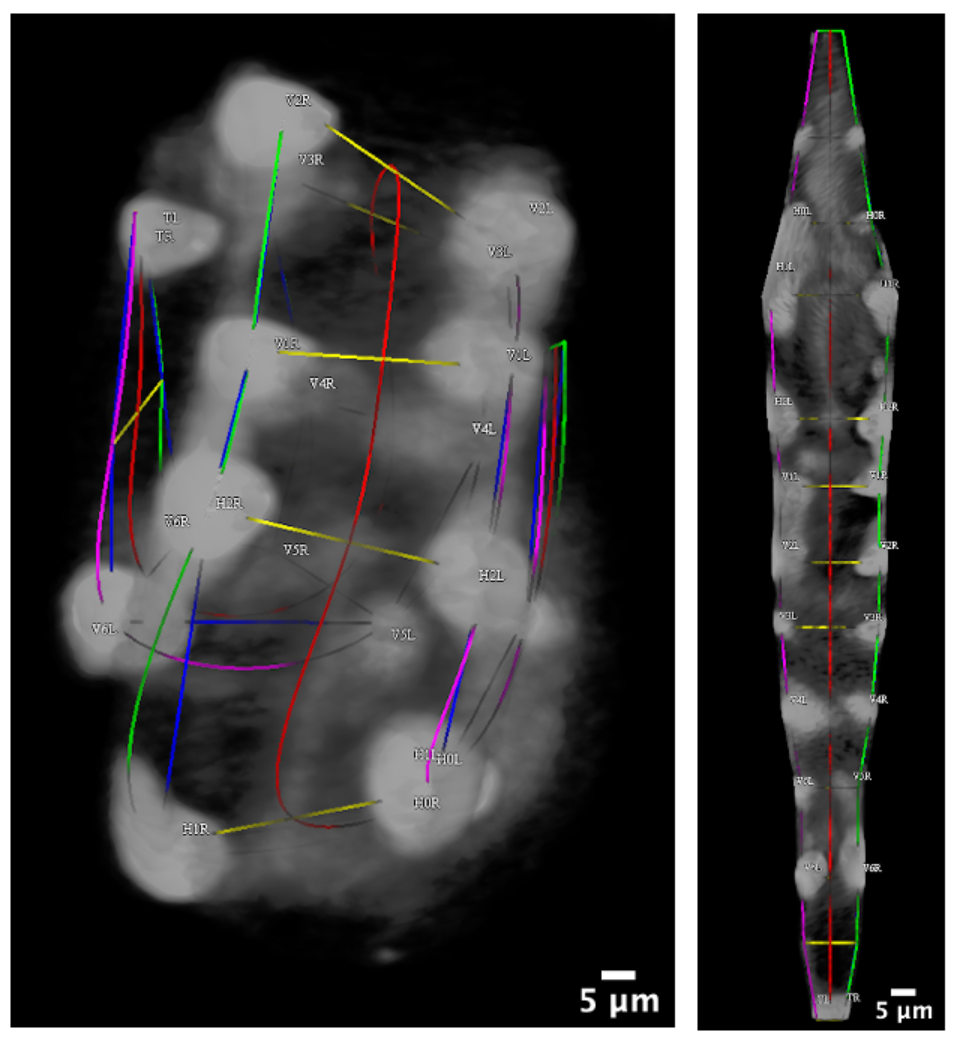}
\caption{\textbf{Rendered image volumes in the MIPAV GUI.} The imaged twisted embryo (left) and imaged straightened embryo (right) rendered in Medical Image Processing, Analysis and Visualization (MIPAV) \cite{mcauliffe_medical_2001}. The fluorescent images are those depicted in Fig~\ref{fig:twist_straight_3d}-A. Trained users navigate the MIPAV GUI to identify seam cells based upon relative positioning and other salient features such as specks of fluorescence on the skin. Correct identification of all imaged nuclei reveals the coiled embryonic posture. Green (left), red (center), and purple (right) splines yield an approximation of the coiled embryo's posture. Yellow lines connect seam cell nuclei laterally. The splines are used with the image volume to sweep planes orthogonal to the center spline, yielding the straightened embryo image.}
\label{fig:mipav}
\end{figure}

We cast posture identification as hypergraph matching and use \textit{EHGM} to solve the resulting optimization problem. The proposed models: \textit{Sides}, \textit{Pairs}, and \textit{Posture} trade off modeling capacity for increased computation to identify optimal solutions. \textit{Sides} expresses posture identification as graph matching; edge-wise (degree \textit{d}=2) features take the form of standardized chord lengths between nuclei laterally and sequentially along each side. The first hypergraphical model, \textit{Pairs}, employs a greater local context than \textit{Sides} using degrees four and six hyperedges to describe relationships between seam cells. Hyperedges formed by two or three sequential pairs (\textit{d}=4,6) better detail local regions throughout the embryo than is capable of a graphical model. Fig~\ref{fig:Pairs_HG}-A presents the hyperedge connectivity among nodes in the \textit{Pairs} model \cite{valdivia_analyzing_2021}. The \textit{Posture} model extends the \textit{Pairs} model by leveraging complete posture (\textit{d}=$n_1$) features in effort to further discriminate between posture hypotheses that appear similar in sequential regions of the embryo. Geometric features help contextualize the coiled posture. Fig~\ref{fig:bends_twists} illustrates three of the features used in the \textit{Pairs} and \textit{Posture} models. The angle $\Theta$ measures the angle between three successive pair midpoints. The angles $\Theta$ decrease throughout development as the worm elongates. Pair-to-pair twist angles $\varphi$ and $\tau$ penalize posture hypotheses in which posterior to anterior transitions are jagged and unnatural in appearance. See S1:\textit{Posture Modeling} for further details and specification of model features. 

\begin{figure}[!ht]
\centering
\includegraphics[width=\textwidth]{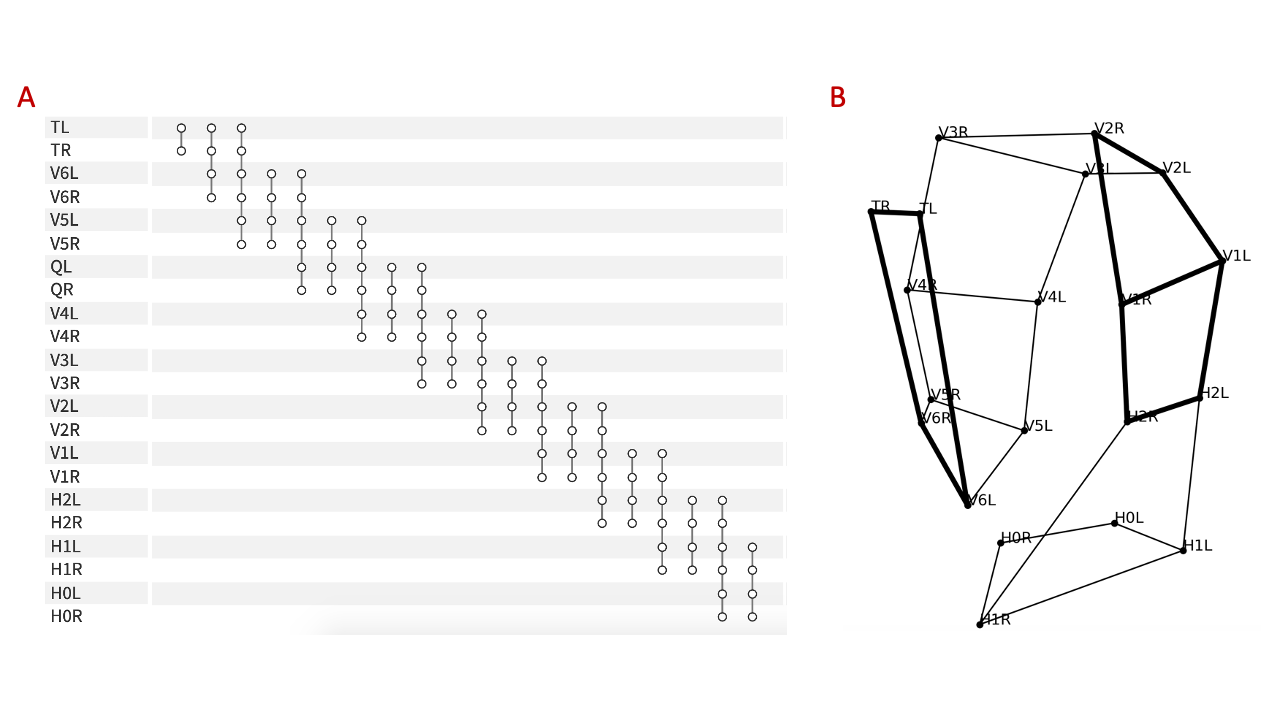}
\caption{\textbf{The \textit{Pairs} hypergraphical model uses expansive local contexts about each portion of the embryo.} A: The \textit{Pairs} hyperedges connect local seam cell nuclei in sets of four and six. B: Degree four hyperedges connect sequential pairs of seam cells while degree six hyperedges connect sequential triplets of pairs. The posterior-most degree four hyperedge and a central degree six hyperedge are \textbf{bolded}.}
\label{fig:Pairs_HG}
\end{figure}

\begin{figure}[!ht]
\centering
\includegraphics[width=\textwidth]{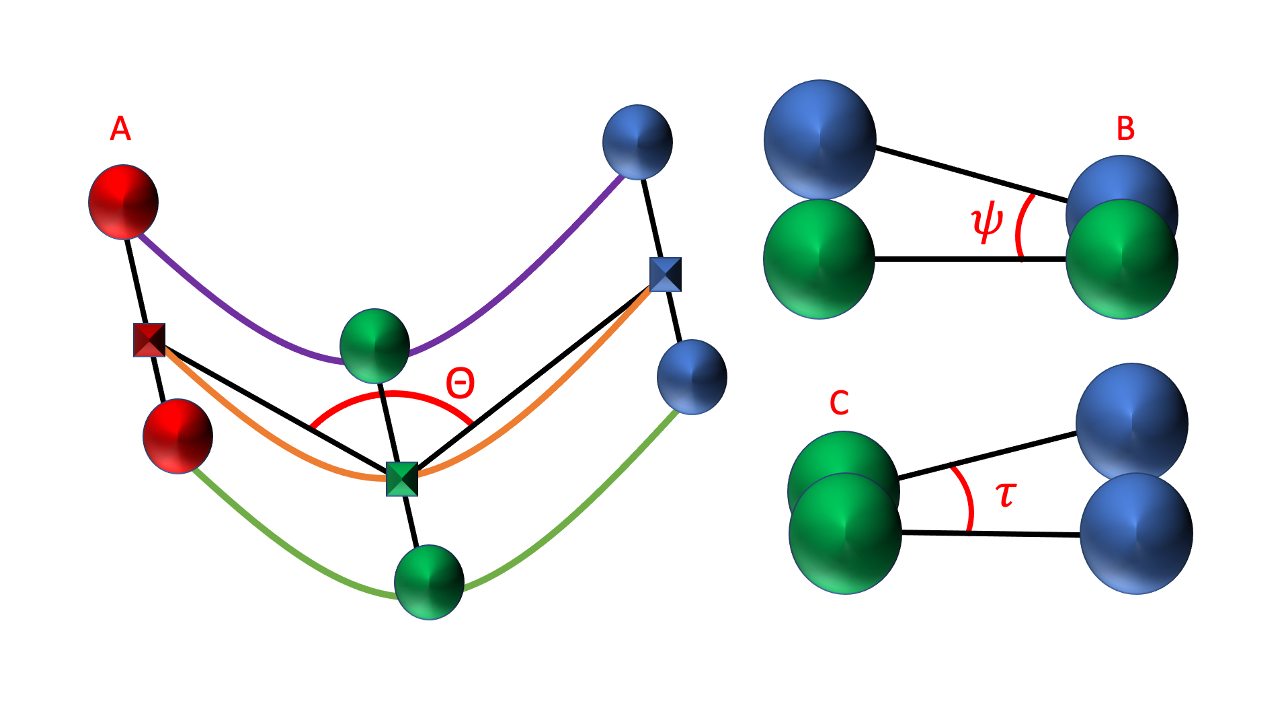}
\caption{\textbf{Hypergraphical geometric features contextualize seam cell assignments.} Anatomically inspired geometric features describe bend and twist of a posture assignment. A: Three pairs of sequential nuclei: red, green, blue. Rectangles represent pair midpoints. The angle $\Theta$ in red is used as a degree six feature given six point to nuclei assignments. B, C: Degree four hypergraphical features measuring twist angles $\varphi$ and $\tau$. These angles measure posterior to anterior twist pair-to-pair and left-right twist, respectively.}
\label{fig:bends_twists}
\end{figure}

The traditional point-set matching task requires a labelled point-set and a second unidentified point-set. Higher order features such as bend and twist angles may vary largely frame-to-frame depending on the posture at moment of imaging. However, elongation throughout late-stage development causes macroscopic trends in these geometric features. We estimate a template posture as a composite of feature measurements from a corpus of manually annotated postures. The templates are time dependent to reflect the elongation from the first point of imaging throughout development until hatching. See S1:\textit{Model Fitting} for details on template estimation.

Together, the fitted models are used with \textit{EHGM} to identify posture in imaged \textit{C. elegans} embryos. The branch size \textit{k}=2 is set for all models, i.e. a lateral pair of seam cell identities are assigned at each branch starting with the tail pair cells \textit{TL} and \textit{TR}. The successive pair cells, \textit{V6L} and \textit{V6R}, are assigned given the established cells and hypergraphical relationships accessible with the hypothesized identities. Fig~\ref{fig:tree} depicts \textit{EHGM} applied to the sample image depicted in Fig~\ref{fig:twist_straight_3d}-A. The initial pair (\textit{TL} and \textit{TR}) is selected, instantiating a search tree (Fig~\ref{fig:tree}-A). Successive seam cell identities are partially assigned according to the given hypergraphical model in a pair-wise fashion. Each branch greedily queues hypothesized point-pair assignments conditioned on the previous branch assignments (black arrows within a branch). The next leading \textit{V6} pair (Fig~\ref{fig:tree}-E) is chosen upon exhaustion of the leading hypothesized \textit{V6} pair (Fig~\ref{fig:tree}-B). \textit{EHGM} continues the recursion to implicitly identify a globally optimal posture under the given hypergraphical model; each possible initial pair will follow this illustrated process subject to pruning of the minimizing posture accessed via the hypothesized tail pair in Fig~\ref{fig:tree}-A. 

\begin{figure}[!ht]
\centering
\includegraphics[width=\textwidth]{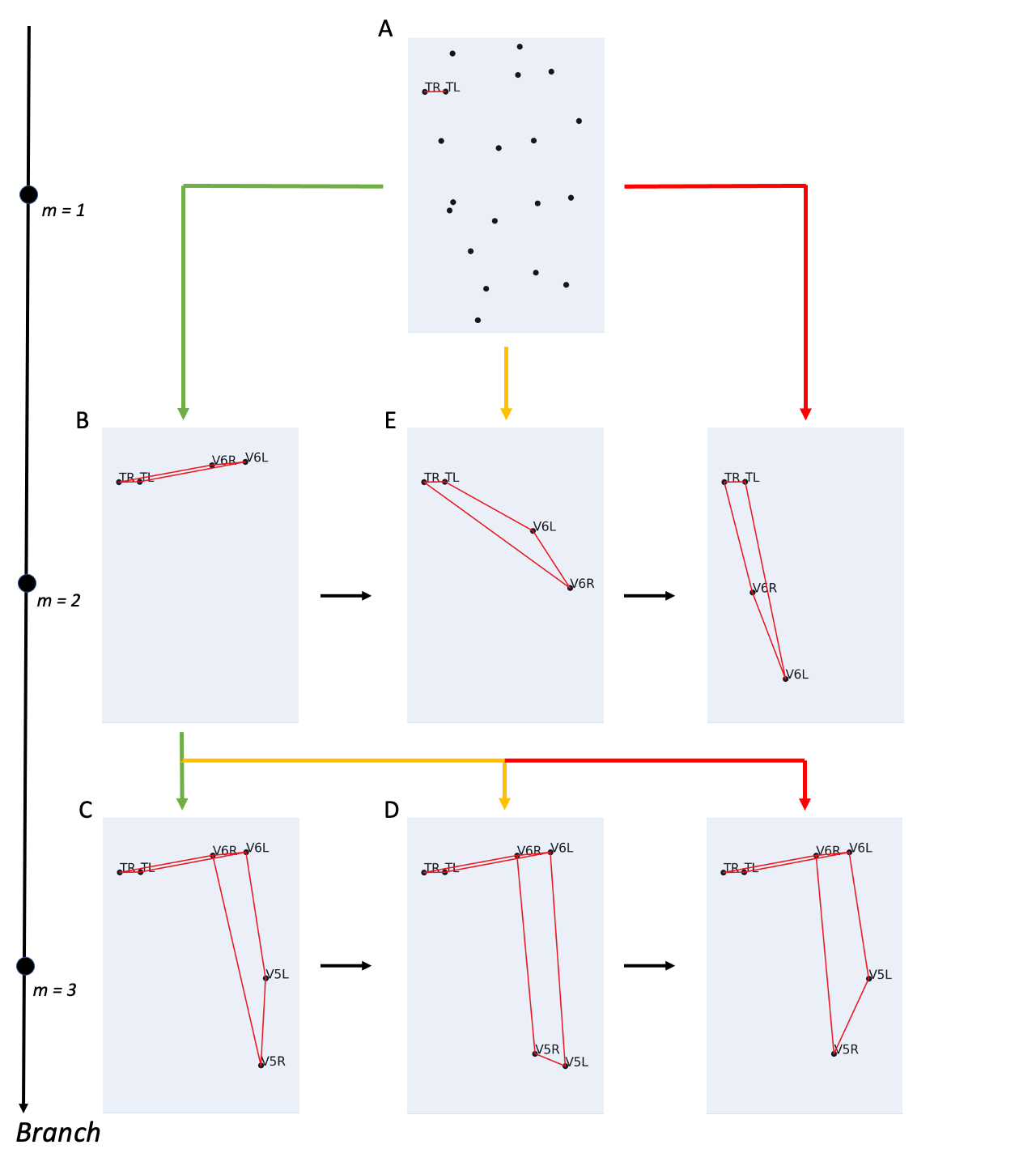}
\caption{\textbf{\textit{EHGM} applied to the sample image depicted in Fig~\ref{fig:twist_straight_3d}-A.} A: Two points are selected at the initial branch for \textit{TL} and \textit{TR}, respectively. Candidates for the successive pair, \textit{V6L} and \textit{V6R}, are queued based on hypergraphical relationships between the established cell identities \textit{TL} and \textit{TR} and each hypothesized \textit{V6} pair (lower costs are green to higher costs in red). B: The leading hypothesis at branch \textit{m}=2 given the initial branch pair is chosen. The recursion continues to queue \textit{V5} pair choices at branch \textit{m}=3. Black arrows within branch \textit{m} specify the ordering of the branch given established cell assignments. Each branch creates a new subproblem of completing the posture given partially assigned identities. C: The tree continuing from the \textit{V5} pair hypothesis is fully explored according to the established recursion. D: The next leading \textit{V5} hypothesis is initiated upon exhaustion of the subtree formed at panel C. E: Implicit enumeration of the subtree formed at panel B causes the search to progress to the second leading \textit{V6} hypothesis.}
\label{fig:tree}
\end{figure}

\section*{Acknowledgments}

This work utilized the computational resources of the NIH HPC Biowulf cluster. (http://hpc.nih.gov). Dr. Evan Ardiel was instrumental in developing descriptive features for identifying worm posture. Post-Baccelaureate research fellows Brandon Harvey and Nensi Karaj were supportive in providing data and discussions concerning the modeling. Dr. Zhen Zhang and Dr. Arye Nehorai provided assistance in using \textit{KerGM} \cite{zhang_kergm_2019}. Dr. Vincent Lyzinski also provided insight on the methods. We also thank Dr. Hank Eden and Dr. Matthew Guay for their careful readings and suggestions. The code and data are available at \url{https://github.com/lauziere/EHGM}. 

\clearpage

\clearpage

\section*{Supporting Information}

\setcounter{figure}{0}
\renewcommand{\figurename}{Supp Fig.}

\subsection*{\textit{EHGM} Pseudocode}
\label{SI:pseudocode}

\textit{EHGM} requires as input the branching step $k \in \{1, 2, \dots, n_1\}$, the dissimilarity tensors $\mathbf{Z}^{(d)}$, $d=1, 2, \dots, n_1$, the size $k$ permutation set $\mathbf{P}$, and optionally an initial upper bound $C_0$ on the global minimum $C^*$. 
The dissimilarity tensors are calculated given the reference hypergraph of size $n_1$, and either a previous frame hypergraph or a template hypergraph as described in \textit{Model Fitting}. The lower degree dissimilarity tensors $\mathbf{Z}^{(d)}, d \leq 2k$ are calculated prior to the search and used to select branches. The higher degree dissimilarity terms $d > 2k$ are calculated during the search as required. Algorithm \ref{Exact_HGM} initializes the search from the first candidate set $\mathbf{Q}_1 = \mathbf{P}$. The search is parallelized via initializing several first branches. Each explores a disjoint section of the domain $\mathcal{X}$.

Algorithm \ref{Exact_HGM} initializes arrays and variables to start the recursive branch search (Algorithm \ref{Visit}). Eligible branch candidates are subset from the general queue $\mathbf{P}$ into $\mathbf{Q}_m$ via the \textit{Enqueue} procedure (Algorithm \ref{Enqueue}). Each $\mathbf{Q}_m$ contains the potential assignments for the next \textit{k} terms that satisfy both the pruning constraints and assignment constraints specified by $\mathcal{X}$. The current assignment cost $\Tilde{C}$ is checked against the current minimum $C^*$ upon reaching a complete assignment. The \text{Backtrack} procedure (Algorithm \ref{Backtrack}) removes $\mathbf{k}_{m-1}$ from $\mathbf{Q}_{m-1}$ when the path from $\mathbf{k}_{m-1}$ is exhausted, which occurs when $\mathbf{Q}_m = \emptyset$. The recursion will continue until $\mathbf{Q}_1$ is empty, signaling the complete enumeration of the search space $S_n$. 

\begin{algorithm}[H]
\SetAlgoLined
Input: $k, C_0, \mathbf{P}, \mathbf{Z}^{(1)}, \dots, \mathbf{Z}^{(2k)}$ \\
Output: $\mathbf{x}^*, C^* = \mathit{f}(\mathbf{x}^*)$ \\

Initialization \\ 
$C^* \leftarrow  C_0$ \\
$\Tilde{\mathbf{H}} = []$ \\
$\Tilde{\mathbf{I}} = []$ \\

$\Tilde{\mathbf{x}} \leftarrow \emptyset$ \\
$\Tilde{C} \leftarrow 0$ \\
$m \leftarrow 1$ \\

$\mathbf{Q}_1 \leftarrow Enqueue(\Tilde{\mathbf{x}}, \mathbf{P}, \Tilde{C}, C^*, 1)$ \\
\While{$\mathbf{Q}_1 \neq \emptyset$}{
    
    $\mathbf{k}_1 \leftarrow \mathbf{Q}_1.pop()$ \\
    $\hat{H_1} \leftarrow H_1(\mathbf{k}_1 | \mathbf{Z}^{(1)}, \dots, \mathbf{Z}^{(k)})$ \\
    $\Tilde{C} \leftarrow \hat{H_1}$ \\
    $\Tilde{\mathbf{H}}[1] = \hat{H_1}$ \\
    $\Tilde{\mathbf{x}} \leftarrow \Tilde{\mathbf{x}} \cup \{ \mathbf{k}_1 \}$ \\
    $m \mathrel{+}= 1$ \\
    $Visit(\mathbf{P},  \Tilde{\mathbf{x}}, \Tilde{C}, C^*, m)$ \\
        
}

Return: $\mathbf{x}^*, C^*$ \\
\caption{EHGM}
\label{Exact_HGM}
\end{algorithm}

\begin{algorithm}[H]
\SetAlgoLined
Input: $\mathbf{P}, \Tilde{\mathbf{x}}, \Tilde{C}, C^*, m$ \\

$\mathbf{Q}_m \leftarrow Enqueue(\Tilde{\mathbf{x}}, \mathbf{P}, \Tilde{C}, C^*, m)$ \\
 
\While{$\mathbf{Q}_m \neq \emptyset$}{
    
    $\hat{\mathbf{k}}_{m} \leftarrow \mathbf{Q}_m.pop()$ \\
    $\Tilde{\mathbf{x}} \leftarrow \Tilde{\mathbf{x}} \cup \{ \hat{\mathbf{k}}_{m} \}$ \\
    $m \mathrel{+}= 1$ \\
    $\hat{H}_m \leftarrow H_m(\mathbf{k}_m | \Tilde{\mathbf{x}}, \mathbf{Z}^{(1)}, \dots, \mathbf{Z}^{(2k)})$ \\
    $\Tilde{\mathbf{H}}[m] = \hat{H}_m$ \\
    $\Tilde{C} \mathrel{+}= \hat{H}_m$ \\
    
    \If{$m \geq 3$}{
        
        $\hat{I}_m \leftarrow I_m(\mathbf{k}_m | \Tilde{\mathbf{x}}, \mathbf{Z}^{(2k+1)}, \dots, \mathbf{Z}^{(mk)})$ \\
        $\Tilde{\mathbf{I}}[m] = \hat{I}_m$ \\
        
        $\Tilde{C} \mathrel{+}= \hat{I}_m$ \\
    
    }
    
    \uIf{$m < M$}{
         $Visit(\mathbf{P},  \Tilde{\mathbf{x}}, \Tilde{C}, C^*, m)$ \\
    
    }
    
    \uElseIf{$m = M$}{

        \If{$\Tilde{C} \leq C^*$}{
            $\mathbf{x}^* \leftarrow \Tilde{\mathbf{x}}$ \\
            $C^* \leftarrow \Tilde{C}$ \\
        }
    }
    
    $Backtrack(\Tilde{\mathbf{x}}, \mathbf{Q}_{m-1}, m)$ \\
    
}
 
\caption{Visit}
\label{Visit}
\end{algorithm}
 
\clearpage

\begin{algorithm}[H]
\SetAlgoLined
Input: $\Tilde{\mathbf{x}}, \mathbf{P}, \Tilde{C}, C^*, m$ \\
Output: $\mathbf{Q}_m$ \\

$\mathbf{Q}_m \leftarrow \emptyset$ \\

\For{$\mathbf{k} \in  \mathbf{P}$}{
    
    \If{$(\mathbf{k} \cap \Tilde{\mathbf{x}} = \emptyset) \wedge (\Tilde{C} + H_m(\mathbf{k} | \Tilde{\mathbf{x}}) < C^*)$}{
        
        $ \mathbf{Q}_{m} \leftarrow \mathbf{Q}_{m} \cup \mathbf{k}$
    }
 }

\caption{Enqueue}
\label{Enqueue}
\end{algorithm}

\begin{algorithm}[H]
\SetAlgoLined
Input: $\Tilde{\mathbf{x}}, \mathbf{Q}_{m-1}, m$ \\

$\mathbf{Q}_{m-1} \leftarrow \mathbf{Q}_{m-1} \setminus \Tilde{\mathbf{x}}_m$ \\
$\Tilde{\mathbf{x}} \leftarrow \{\hat{\mathbf{x}}_1, \hat{\mathbf{x}}_2, \dots, \hat{\mathbf{x}}_{m-1}\}$ \\
$\Tilde{C} \leftarrow \Sigma_{j=1}^{m-1} (\mathbf{C}_m + \mathbf{I}_m)$

\caption{Backtrack}
\label{Backtrack}
\end{algorithm}

\subsection*{Hypergraphical Objective Decomposition}
\label{SI:hyp_obj}
{\bf The hypergraphical optimization objective can be decomposed according to hyperedge multiplicity and branching step. The stratification enables efficient search via \textit{EHGM}.} 

\newtheorem{theorem}{Theorem}

\begin{theorem}

Assume an assignment problem objective \textit{f} is in the form:

\begin{multline}
\label{eqn:additive}
    \mathit{f}(X | \mathbf{Z}^{(1)}, \mathbf{Z}^{(2)}, \dots,  \mathbf{Z}^{(n_1)}) = \sum_{l_1=1}^{n_1} \sum_{l'_1=1}^{n_2} \mathbf{Z}^{(1)}_{l_1 l'_1} x_{l_1 l'_1} + \sum_{l_1=1}^{n_1} \sum_{l'_1=1}^{n_2} \sum_{l_2=l_1+1}^{n_1} \sum_{l'_2=1}^{n_2} \mathbf{Z}^{(2)}_{l_1 l'_1 l_2 l'_2} x_{l_1 l'_1} x_{l_2 l'_2}  \\ + \sum_{l_1=1}^{n_1} \sum_{l'_1=1}^{n_2} \sum_{l_2=l_1+1}^{n_1} \sum_{l'_2=1}^{n_2} \sum_{l_3=l_2+1}^{n_1} \sum_{l'_3=1}^{n_2} \mathbf{Z}^{(3)}_{l_1 l'_1 l_2 l'_2 l_3 l'_3} x_{l_1 l'_1} x_{l_2 l'_2} x_{l_3 l'_3}  + ... \\ + \sum_{l_1=1}^{n_1} \sum_{l'_1=1}^{n_2}  ... \sum_{l_{n_1}=l_{n_1-1}+1}^{n_1} \sum_{l'_{n_1}=1}^{n_2} \mathbf{Z}^{(n_1)}_{l_1 l'_1 \dots l_{n_1} l'_{n_1}} x_{l_1 l'_1} \dots x_{l_{n_1} l'_{n_1}}
\end{multline}

Then, for $k \in \{1, 2, \dots, n_1$\}, the stratification fully describes the objective \textit{f} after $M=\frac{n_1}{k}$ branches. Define of $H_1$, $H_m$, and $I_m$: 

\begin{multline*}
    H_{1}(\mathbf{K}_1 | \mathbf{Z}^{(1)}, \mathbf{Z}^{(2)}, ..., \mathbf{Z}^{(k)}) \vcentcolon = \\ \sum_{i_1=1}^{k} \mathbf{Z}^{(1)}_{l_{i_1} l'_{i_1}} + \sum_{i_1=1}^{k} \sum_{i_2=i_1+1}^{k} \mathbf{Z}^{(2)}_{l_{i_1} l'_{i_1} l_{i_2} l'_{i_2}} + ... + \sum_{i_1=1}^{k} \sum_{i_2=i_1+1}^{k} \dots \sum_{i_k=i_{k-1}+1}^{k} \mathbf{Z}^{(k)}_{l_1 l'_{i_1} l_{i_2} l'_{i_2} \dots l_{i_k} l'_{i_k}}
\end{multline*}

\begin{multline*}
    H_{m}(\mathbf{K}_{m} | \mathbf{K}_{1}, ..., \mathbf{K}_{m-1}, \mathbf{Z}^{(1)}, ..., \mathbf{Z}^{(2k)}) \vcentcolon = \\
    \sum_{i_1=(m-1)k+1}^{mk} \mathbf{Z}^{(1)}_{l_{i_1} l'_{i_1}} + \sum_{i_2=(m-1)k+1}^{mk} \sum_{i_1=1}^{i_2-1} \mathbf{Z}^{(2)}_{l_{i_1} l'_{i_1} l_{i_2} l'_{i_2}} \\ + \sum_{i_3=(m-1)k+1}^{mk} \sum_{i_2=1}^{i_3-1} \sum_{i_1=1}^{i_2-1} \mathbf{Z}^{(3)}_{l_{i_1} l'_{i_1} l_{i_2} l'_{i_2} l_{i_3} l'_{i_3}} + ... +  \sum_{i_{2k} = (m-1)k+1}^{mk} \sum_{i_{2k-1} = 1}^{i_{2k}}  \dots \sum_{i_1 = 1}^{i_2-1} \mathbf{Z}^{(2k)}_{l_{i_1} l'_{i_1} \dots l_{i_{2k}} l'_{i_{2k}}}
\end{multline*}

\begin{equation*}
    I_m(\mathbf{K}_{m} | \mathbf{K}_{1}, \mathbf{K}_{2}, \dots, \mathbf{K}_{m-1}, \mathbf{Z}^{(2k+1)}, \dots, \mathbf{Z}^{(mk)}) \vcentcolon = \sum_{d=2k+1}^{mk} \Xi^{(d)}_m
\end{equation*}

where

\begin{multline*}
    \Xi^{(d)}_m(\mathbf{K}_{m} | \mathbf{K}_{1}, \mathbf{K}_{2}, \dots, \mathbf{K}_{m-1}, \mathbf{Z}^{(2k+1)}, \dots, \mathbf{Z}^{(mk)}) \vcentcolon =  \\ \sum_{i_d=(m-1)k+1}^{mk} \sum_{i_{d-1}=1}^{i_d-1} ... \sum_{i_1=1}^{i_2-1} \mathbf{Z}^{(d)}_{l_{i_1} l'_{i_1} ... l_{i_d} l'_{i_d}} 
\end{multline*}

Then, the degree $n_1$ hypergraph matching objective \textit{f} can be expressed 

\begin{equation*}
    \mathit{f}(X | \mathbf{Z}^{(1)}, \mathbf{Z}^{(2)}, \dots,  \mathbf{Z}^{(n_1)}) = \sum_{m=1}^m \mathit{H}_m + \sum_{m=3}^m I_m
\end{equation*}

\end{theorem}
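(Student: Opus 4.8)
The plan is to evaluate both sides on an arbitrary feasible $X \in \mathcal{X}$ and show they agree term-by-term; since the right-hand side is a fixed rearrangement of the $\mathbf{Z}^{(d)}$-entries, it is enough to exhibit a bijection between the surviving monomials of $\mathit{f}$ and those collected by the $H_m$'s and $I_m$'s. First I would linearize $\mathit{f}$ using the row constraints of $\mathcal{X}$: because $\sum_{l'=1}^{n_2} x_{l l'} = 1$ for every $l$, each row of $X$ has a single nonzero entry, whose column we call $\sigma(l)$. For any $d$ and any $l_1 < \cdots < l_d$, the product $x_{l_1 l'_1}\cdots x_{l_d l'_d}$ equals $1$ exactly when $l'_s = \sigma(l_s)$ for all $s$, so the innermost sums over $l'_1,\dots,l'_d$ in each degree-$d$ block of $\mathit{f}$ collapse, giving
\begin{equation*}
  \mathit{f}(X) \;=\; \sum_{d=1}^{n_1}\ \sum_{1\le l_1<\cdots<l_d\le n_1} \mathbf{Z}^{(d)}_{l_1\sigma(l_1)\cdots l_d\sigma(l_d)},
\end{equation*}
a single sum over all nonempty index subsets $S=\{l_1<\cdots<l_d\}$ of $\{1,\dots,n_1\}$.

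Next I would partition these subsets by the branch block containing their largest element. Write $B_m := \{(m-1)k+1,\dots,mk\}$ for $m=1,\dots,M$. Every nonempty $S$ has a unique maximum $l_d=\max S$, which lies in exactly one $B_m$, and for fixed $m$ and fixed size $d$ the subsets with $\max S\in B_m$ and $|S|=d$ are enumerated precisely by letting $i_d$ range over $B_m$ and then $1\le i_1<\cdots<i_{d-1}<i_d$ range freely. This is exactly the ``reversed'' nested sum $\sum_{i_d=(m-1)k+1}^{mk}\sum_{i_{d-1}=1}^{i_d-1}\cdots\sum_{i_1=1}^{i_2-1}$ appearing in the definitions of $H_m$ and $\Xi^{(d)}_m$ (and, after a harmless relabelling of the summation order, in $H_1$); moreover all vertices of such an $S$ have index $\le mk$, so the corresponding $\mathbf{Z}^{(d)}$-entry is determined once $\mathbf{K}_1,\dots,\mathbf{K}_m$ are fixed. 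It then remains to see which degrees $d$ can occur for a given $m$: a subset with $\max S\in B_m$ has size between $1$ and $mk$, so for $m=1$ only $d\le k$ occur and these are exactly the summands of $H_1$; for $m=2$ only $d\le 2k$ occur and these are exactly the summands of $H_2$; and for $m\ge 3$ the degrees $1,\dots,2k$ give the summands of $H_m$ while the degrees $2k+1,\dots,mk$ give $\sum_{d=2k+1}^{mk}\Xi^{(d)}_m = I_m$. Summing over all $m$ recovers $\mathit{f}(X)=\sum_{m=1}^{M} H_m + \sum_{m=3}^{M} I_m$, which is the asserted identity.

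The step needing genuine care is this last combinatorial accounting: one must check that ``classify $S$ by the block of its maximal element, then by its size $d$'' is a true partition of the nonempty subsets of $\{1,\dots,n_1\}$, so that no $\mathbf{Z}^{(d)}$-entry is double-counted or dropped, and in particular that no term of degree $>2k$ ever lands inside an $H_m$ and no term of degree $\le 2k$ ever lands inside an $I_m$. The boundary cases $m\le 2$ (where $H_m$ already absorbs every admissible degree and $I_m$ is empty) and $d=2k$ (the seam between the $H$- and $I$-ranges) are where an off-by-one would most naturally slip in and should be verified explicitly; by contrast, the linearization via the constraints of $\mathcal{X}$ and the matching of the triangular sums to the reversed nested sums are routine.
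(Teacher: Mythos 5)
Your proposal is correct, but it reaches the identity by a genuinely different route than the paper. The paper proceeds by induction on the branch index: it first writes out the single-assignment case $k=1$ explicitly through branches $1$--$4$, then the general $k>1$ case through branch $4$, and in each inductive step checks that the terms newly contributed by $H_{m+1}$ and $I_{m+1}$ are exactly the degree-$d$ summands whose largest index lands in the new block, splitting the degrees into the groups $\{1\}$, $\{2,\dots,2k\}$, $\{2k+1,\dots,mk\}$, $\{mk+1,\dots,(m+1)k\}$. You instead make the linearization over $\mathcal{X}$ explicit (collapsing $f$ to a single sum over nonempty subsets $S\subseteq\{1,\dots,n_1\}$ via the row constraints, which the paper treats as implicit notation) and then give one global partition of those subsets by the block $B_m$ containing $\max S$ and by $|S|=d$, noting that the reversed nested sums in $H_m$ and $\Xi^{(d)}_m$ enumerate precisely the cells of that partition. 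This is the same combinatorial fact that powers the paper's inductive step, but promoted to a one-shot bijection: it treats all $k$ uniformly, needs no base cases, and makes the underlying invariant transparent (each hyperedge is charged to the branch at which its largest vertex is assigned, which is the first branch at which that dissimilarity is computable --- the property the algorithm actually exploits). What the paper's induction buys in exchange is an explicit running statement that after $m$ branches the accumulated cost equals $f$ restricted to the first $mk$ vertices, which is the form of the claim used for bounding during the search. The boundary checks you flag ($m\le 2$, where only degrees $d\le mk\le 2k$ can occur so $H_m$ absorbs everything and $I_m$ is vacuous, and the seam at $d=2k$) are exactly where care is needed, and your accounting there is consistent with the definitions, modulo the paper's evident typos (the upper limit $i_{2k}$ versus $i_{2k}-1$ in the last summand of $H_m$, and $\sum_{m=1}^{m}$ for $\sum_{m=1}^{M}$ in the conclusion).
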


\begin{proof}

    First consider the single branching case $k=1$. This yields $M=\frac{n_1}{k}=\frac{n_1}{1}=n_1$ branches. Each branch yields one assignment; i.e. $K_m = l'_m$ is assigned to the vertex $l_m$. The initial branch selection rule $H_1$ can only utilize the first order term:
    
    \begin{equation*}
        H_1(K_1 | \mathbf{Z}^{(1)}) = \mathbf{Z}^{(1)}_{l_1 l'_1}
    \end{equation*}
    
    Then the general selection rule for the second branch will: gather the first order costs for the second assignment as well as the quadratic (second order) costs between the first two assignments: 
    
    \begin{equation*}
        H_2(K_2 | K_1, \mathbf{Z}^{(1)}, \mathbf{Z}^{(2)}) = \mathbf{Z}^{(1)}_{l_2 l'_2} + \sum_{i_2=2}^{2} \sum_{i_1=1}^{i_2} \mathbf{Z}^{(2)}_{l_{i_1} l'_{i_1} l_{i_2} l'_{i_2}} = \mathbf{Z}^{(1)}_{l_2} + \mathbf{Z}^{(2)}_{l_1 l'_1 l_2 l'_2}
    \end{equation*}
    
    The third branching step will include $H_3$ and $I_3$. $H_3$ follows from $H_2$:
    
    \begin{equation*}
        H_3(K_3 | K_1, K_2, \mathbf{Z}^{(1)}, \mathbf{Z}^{(2)}) = \mathbf{Z}^{(1)}_{l_3 l'_3} + \sum_{i_2=3}^{3} \sum_{i_1=1}^{i_2-1} \mathbf{Z}^{(2)}_{l_{i_1} l'_{i_1} l_{i_2} l'_{i_2}} = \mathbf{Z}^{(1)}_{l_3 l'_3} + \mathbf{Z}^{(2)}_{l_1 l'_1 l_3 l'_3} +  \mathbf{Z}^{(2)}_{l_2 l'_2 l_3 l'_3}
    \end{equation*}

    \begin{equation*}
        I_3(K_3 |  K_1, K_2, \mathbf{Z}^{(3)}) = \Xi^{(3)}_m = \mathbf{Z}^{(3)}_{l_1 l'_1 l_2 l'_2 l_3 l'_3}
    \end{equation*}
    
    Note that if $n_1=3$, then $H_1 + H_2 + H_3 + I_3$ fully describes the third order assignment problem:
    
    \begin{multline}
        H_1 + H_2 + H_3 + I_3 \\ =
         \underbrace{\mathbf{Z}^{(1)}_{l_1 l'_1}}_{H_1} + \underbrace{\mathbf{Z}^{(1)}_{l_2 l'_2} + \mathbf{Z}^{(2)}_{l_1 l'_1 l_2 l'_2}}_{H_2} + \underbrace{\mathbf{Z}^{(1)}_{l_3 l'_3} + \mathbf{Z}^{(2)}_{l_1 l'_1 l_3 l'_3} + \mathbf{Z}^{(2)}_{l_2 l'_2 l_3 l'_3}}_{H_3} + \underbrace{\mathbf{Z}^{(3)}_{l_1 l'_1 l_2 l'_2 l_3 l'_3}}_{I_3} \\
         = \sum_{i_1=1}^{3} \mathbf{Z}^{(1)}_{l_{i_1} l'_{i_1}} + \sum_{i_1=1}^{3} \sum_{i_2=i_1 + 1}^{3} \mathbf{Z}^{(2)}_{l_{i_1} l'_{i_1} l_{i_2} l'_{i_2}} + \sum_{i_1=1}^{3} \sum_{i_2=i_1+1}^{3} \sum_{i_3 = i_2+1}^{3}  \mathbf{Z}^{(3)}_{l_{i_1} l'_{i_1} l_{i_2} l'_{i_2} l_{i_3} l'_{i_3}} \\ = f(X | \mathbf{Z}^{(1)}, \mathbf{Z}^{(2)}, \mathbf{Z}^{(3)})
    \end{multline}
    
    Now consider the extension to $n_1=4$, yielding a fourth degree assignment problem. The fourth branch will assign the next term, $K_4 = l'_4$. The terms $H_4$ and $I_4$ will then fully specify the fourth degree problem:
    
    \begin{equation*}
        H_4(K_4 | K_1, K_2, K_3, \mathbf{Z}^{(1)}, \mathbf{Z}^{(2)}) = \mathbf{Z}^{(1)}_{l_4 l'_4} + \mathbf{Z}^{(2)}_{l_1 l'_1 l_4 l'_4} + \mathbf{Z}^{(2)}_{l_2 l'_2 l_4 l'_4} + \mathbf{Z}^{(2)}_{l_3 l'_3 l_4 l'_4} 
    \end{equation*}
     
    The second aggregation rule $I_4$ will consider third order terms between branches $1, 2$ and $4$ as well as the fourth order term using all four assignments:
    
    \begin{equation*}
        I_4(K_4 | K_1, K_2, K_3, \mathbf{Z}^{(3)}, \mathbf{Z}^{(4)}) = \Xi^{(3)}_{4} + \Xi^{(4)}_{4} = \mathbf{Z}^{(3)}_{l_1 l'_1 l_2 l'_2 l_4 l'_4} + \mathbf{Z}^{(3)}_{l_2 l'_2 l_3 l'_3 l_4 l'_4} + \mathbf{Z}^{(4)}_{l_1 l'_1 l_2 l'_2 l_3 l'_3 l_4 l'_4} 
    \end{equation*}
    
    Joining the fourth branch: 
    
    \begin{multline}
        H_1 + H_2 + H_3 + I_3 + H_4 + I_4 = \\
         \underbrace{\sum_{i_1=1}^{3} \mathbf{Z}^{(1)}_{l_{i_1} l'_{i_1}} + \sum_{i_1=1}^{3} \sum_{i_2=i_1+1}^{3} \mathbf{Z}^{(2)}_{l_{i_1} l'_{i_1} l_{i_2} l'_{i_2}} + \sum_{i_1=1}^{3} \sum_{i_2=i_1+1}^{3} \sum_{i_3=i_2+1}^{3} \mathbf{Z}^{(3)}_{l_{i_1} l'_{i_1} l_{i_2} l'_{i_2} l_{i_3} l'_{i_3}}}_{H_1 + H_2 + H_3 + I_3} \\ +  \underbrace{\mathbf{Z}^{(1)}_{l_4 l'_4} + \mathbf{Z}^{(2)}_{l_1 l'_1 l_4 l'_4} + \mathbf{Z}^{(2)}_{l_2 l'_2 l_4 l'_4} + \mathbf{Z}^{(2)}_{l_3 l'_3 l_4 l'_4}}_{H_4} + \underbrace{\mathbf{Z}^{(3)}_{l_1 l'_1 l_2 l'_2 l_4 l'_4} + \mathbf{Z}^{(3)}_{l_2 l'_2 l_3 l'_3 l_4 l'_4} + \mathbf{Z}^{(4)}_{l_1 l'_1 l_2 l'_2 l_3 l'_3 l_4 l'_4}}_{I_4} \\ = \sum_{i_1=1}^{4} \mathbf{Z}^{(1)}_{l_{i_1} l'_{i_1}} + \sum_{i_1=1}^{4} \sum_{i_2=i_1+1}^{4} \mathbf{Z}^{(2)}_{l_{i_1} l'_{i_1} l_{i_2} l'_{i_2}} + \sum_{i_1=1}^{4} \sum_{i_2=i_1+1}^{4} \sum_{i_3=i_2+1}^{4} \mathbf{Z}^{(3)}_{l_{i_1} l'_{i_1} l_{i_2} l'_{i_2} l_{i_3} l'_{i_3}} \\ + \sum_{i_1=1}^{4} \sum_{i_2=i_1+1}^{4} \sum_{i_3=i_2+1}^{4} \sum_{i_4=i_3+1}^{4} \mathbf{Z}^{(4)}_{l_{i_1} l'_{i_1} l_{i_2} l'_{i_2} l_{i_3} l'_{i_3} l_{i_4} l'_{i_4}} \\ = f(X | \mathbf{Z}^{(1)}, \mathbf{Z}^{(2)}, \mathbf{Z}^{(3)}, \mathbf{Z}^{(4)})
    \end{multline}
    
    Now consider the arbitrary $(m+1)^{st}$ branch. This will yield the full objective for an assignment problem of size $m+1$ up to degree $m+1$. 
    
    \begin{multline}
        \sum_{p=1}^{m+1} H_p + \sum_{p=3}^{m+1} I_p = \sum_{p=1}^{m} H_p + \sum_{p=3}^{m} I_p + H_{m+1} + I_{m+1} = \\ \underbrace{\sum_{i_1=1}^{m} \mathbf{Z}^{(1)}_{l_{i_1} l'_{i_1}} + \sum_{i_1=1}^{m} \sum_{i_2=i_1+1}^{m} \mathbf{Z}^{(2)}_{l_{i_1} l'_{i_1} l_{i_2} l'_{i_2}} + \dots + \sum_{i_1=1}^{m} \sum_{i_2=i_1+1}^{m} ... \sum_{i_m=i_{m-1}+1}^{m} \mathbf{Z}^{(m)}_{l_{i_1} l'_{i_1} l_{i_2} l'_{i_2} ... l_{i_m} l'_{i_m}}}_{\sum_{p=1}^{m} H_p + \sum_{p=3}^{m} I_p} \\ + \underbrace{\mathbf{Z}^{(1)}_{l_{i_{m+1}} l'_{i_{m+1}}} + \sum_{i_1=1}^{m} \mathbf{Z}^{(2)}_{l_{i_1} l'_{i_1} l_{m+1} l'_{m+1}}}_{H_{m+1}} + \underbrace{\sum_{d=3}^{m+1} \Xi^{(d)}_{m+1}}_{I_{m+1}}
    \end{multline}
    
   It is sufficient to show each degree $d \in \{1, 2, \dots, m+1\}$ hyperedge is fully accounted for across all $m+1$ points to prove the $(m+1)^{st}$ branch satisfies the objective \textit{f}. The hyperedge costs across all points will be decomposed into three disjoint sets, and each set considered at a time:
    
    \begin{equation*}
        \{1\}, \{2\}, \{3, \dots, m\}, \{m+1\}
    \end{equation*}
    
    The first and final of the four cases are trivial. The first degree terms are enumerated via the first term in $H_{m+1}$, while $Xi^{(m+1)}_{m+1}$ explicitly addresses the degree $m+1$ hyperedge comprising all assignments: $\mathbf{Z}^{(m+1)}_{l_1 l'_1 l_2 l'_2 ... l_{m+1} l'_{m+1}}$. We will focus on the second and third cases. The degree $d=2$ terms are formed by the addition of branch $m+1$ are considered in term $H_{m+1}$:
    
    \begin{equation*}
        \sum_{i_1=1}^{m} \sum_{i_2=i_1+1}^{m} \mathbf{Z}^{(2)}_{l_{i_1} l'_{i_1} l_{i_2} l'_{i_2}} + \sum_{i_1=1}^{m} \mathbf{Z}^{(2)}_{l_{i_1} l'_{i_1} l_{i_{m+1}} l'_{i_{m+1}}}  = \sum_{i_1 = 1}^{m+1} \sum_{i_2 = i_1+1}^{m+1} \mathbf{Z}^{(2)}_{l_{i_1} l'_{i_1} l_{i_2} l'_{i_2}}
    \end{equation*}
    
     Let $d \in \{3, \dots, m\}$. The completion is similar to the $d=2$ degree case; however, the term $\Xi^{(d)}_{m+1}$ in $I_{m+1}$ address higher degree hyperedges up to and including degree $m$ concerning branch $m+1$:
     
     \begin{equation*}
        \sum_{i_1=1}^{m} ... \sum_{i_d=i_{d-1}+1}^{m} \mathbf{Z}^{(d)}_{l_{i_1} l'_{i_1} ... l_{i_d} l'_{i_d}} + \Xi^{(d)}_{m+1} = \sum_{i_1=1}^{m+1} ... \sum_{i_d=i_{d-1}+1}^{m+1} \mathbf{Z}^{(d)}_{l_{i_1} l'_{i_1} ... l_{i_d} l'_{i_d}}
    \end{equation*}
    
    Therefore, the $(m+1)^{st}$ step fully accrues the objective \textit{f}:
    
    \begin{equation*}
        \sum_{p=1}^{m+1} H_p + \sum_{p=3}^{m+1} I_p = f(X | \mathbf{Z}^{(1)}, \mathbf{Z}^{(2)}, \dots, \mathbf{Z}^{(m+1)})
    \end{equation*}
    
    Then inductively, the stratification holds such that: 
    
    \begin{equation*}
        \sum_{m=1}^{n_1} H_m + \sum_{m=3}^{n_1} I_m = f(X | \mathbf{Z}^{(1)}, \mathbf{Z}^{(2)}, \dots \mathbf{Z}^{(n_1)})
    \end{equation*}
    
    Now consider the plural branching rule $k>1$. The proof will follow from the single assignment branching case. The base case at the fourth branch will be established, followed by the induction hypothesis demonstrating the branching from $m$ to $m+1$. First, define the terms $H_1, H_2, H_3, I_3, H_4$, and $I_4$:
    
    \begin{equation}
        H_1 = \sum_{i_1=1}^{k} \mathbf{Z}^{(1)}_{l_{i_1} l'_{i_1}} + \sum_{i_1=1}^{k} \sum_{i_2=i_1+1}^{k} \mathbf{Z}^{(2)}_{l_{i_1} l'_{i_1} l_{i_2} l'_{i_2}} + ... + \sum_{i_1=1}^{k} \sum_{i_2=i_1+1}^{k} ... \sum_{i_k=i_{k-1}+1}^{k} \mathbf{Z}^{(k)}_{l_{i_1} l'_{i_1} l_{i_2} l'_{i_2} ... l_{i_k} l'_{i_k}}
    \end{equation}
    
    \begin{equation}
        H_2 = \sum_{i_1=k+1}^{2k} \mathbf{Z}^{(1)}_{l_{i_1} l'_{i_1}} + \sum_{i_2=k+1}^{2k} \sum_{i_1=1}^{i_2-1} \mathbf{Z}^{(2)}_{l_{i_1} l'_{i_1} l_{i_2} l'_{i_2}} + ... + \sum_{i_{2k}=k+1}^{2k} ... \sum_{i_2=1}^{i_3-1} \sum_{i_1=1}^{i_2-1} \mathbf{Z}^{(2k)}_{l_{i_1} l'_{i_1} ... l_{i_{2k}} l'_{i_{2k}}} 
    \end{equation}
    
    \begin{equation}
        H_3 = \sum_{i_1=2k+1}^{3k} \mathbf{Z}^{(1)}_{l_{i_1} l'_{i_1}} + \sum_{i_2=2k+1}^{3k} \sum_{i_1=1}^{i_2-1} \mathbf{Z}^{(2)}_{l_{i_1} l'_{i_1} l_{i_2} l'_{i_2}} + ... + \sum_{i_{2k}=2k+1}^{3k} ... \sum_{i_1=1}^{i_2-1} \mathbf{Z}^{(2k)}_{l_{i_1} l'_{i_1} ... l_{i_{2k}} l'_{i_{2k}}} 
    \end{equation}
    
    \begin{equation}
        I_3 = \sum_{d=2k+1}^{3k} \Xi^{(d)}_{3} 
    \end{equation}
    
    \begin{equation}
        H_4 = \sum_{i_1=3k+1}^{4k} \mathbf{Z}^{(1)}_{l_{i_1} l'_{i_1}} + \sum_{i_2=3k+1}^{4k} \sum_{i_1=1}^{i_2-1} \mathbf{Z}^{(2)}_{l_{i_1} l'_{i_1} l_{i_2} l'_{i_2}} + ... + \sum_{i_{2k}=3k+1}^{4k} ... \sum_{i_1=1}^{i_2-1} \mathbf{Z}^{(2k)}_{l_{i_1} l'_{i_1} ... l_{i_{2k}} l'_{i_{2k}}} 
    \end{equation}
    
    \begin{equation}
        I_4 = \sum_{d=3k+1}^{4k} \Xi^{(d)}_{4} 
    \end{equation}
    
    The terms presented thus far for the general $k>1$ case fully describe all terms concerning assignments $1, 2, \dots 4k$ up to degree $4k$. The hyperedge multiplicities will again be partitioned into disjoint groups: 
    
    \begin{equation*}
        \{1\}, \{2, \dots, k\}, \{k+1, \dots, 2k\}, \{2k+1, \dots, 3k\}, \{3k+1, \dots, 4k\}
    \end{equation*}
    
    The first case is trivial, just as in the single assignment branching ($k=1$) proof. Unary terms are accounted for in the first summand of each $H_m$. Then, consider $d \in \{2, \dots, k\}$:
    
    \begin{multline*}
        \underbrace{\sum_{i_1=1}^{k} ... \sum_{i_d=i_{d-1}+1}^{k}  \mathbf{Z}^{(d)}_{l_{i_1} l'_{i_1} ... l_{i_d} l'_{i_d}}}_{H_1} + \underbrace{\sum_{i_d=k+1}^{2k} \sum_{i_{d-1}=1}^{i_d-1} ... \sum_{i_1=1}^{i_2-1}  \mathbf{Z}^{(d)}_{l_{i_1} l'_{i_1} ... l_{i_d} l'_{i_d}}}_{H_2} \\ + \underbrace{\sum_{i_d=2k+1}^{3k} \sum_{i_{d-1}=1}^{i_d-1} ... \sum_{i_1=1}^{i_2-1}  \mathbf{Z}^{(d)}_{l_{i_1} l'_{i_1} ... l_{i_d} l'_{i_d}}}_{H_3} + \underbrace{\sum_{i_d=3k+1}^{4k} \sum_{i_{d-1}=1}^{i_d-1} ... \sum_{i_1=1}^{i_2-1}  \mathbf{Z}^{(d)}_{l_{i_1} l'_{i_1} ... l_{i_d} l'_{i_d}}}_{H_4}\\ = \sum_{i_1=1}^{4k} \sum_{i_2=i_1+1}^{4k} ... \sum_{i_d=i_{d-1}+1}^{4k} \mathbf{Z}^{(d)}_{l_{i_1} l'_{i_1} ... l_{i_d} l'_{i_d}}
    \end{multline*}
    
    The proof for degree $d \in \{k+1, \dots, 2k\}$ follows immediately from the grouping presented above, but without the initial branch selection rule term $H_1$. Next, assume $d \in \{2k+1, \dots, (m-1)k\}$. Degree $d$ hyperedge dissimilarities will be contained in both $I_3$ and $I_4$ terms: 
    
    \begin{multline*}
        \underbrace{\sum_{i_d = 2k+1}^{3k} \sum_{i_{d-1} = 1}^{i_d-1} ... \sum_{i_1 = 1}^{i_2-1} \mathbf{Z}^{(d)}_{l_{i_1} l'_{i_1} ... l_{i_{d}} l'_{i_{d}}}}_{I_3} +  \underbrace{\sum_{i_d = 3k+1}^{4k} \sum_{i_{d-1} = 1}^{i_d-1} ... \sum_{i_1 = 1}^{i_2-1} \mathbf{Z}^{(d)}_{l_{i_1} l'_{i_1} ... l_{i_{d}} l'_{i_{d}}}}_{I_4} \\ = \sum_{i_1 = 1}^{4k} ... \sum_{i_{d-1} = i_{d-2}+1}^{4k} \sum_{i_d = i_{d-1}+1}^{4k} \mathbf{Z}^{(d)}_{l_{i_1} l'_{i_1} ... l_{i_{d}} l'_{i_{d}}}
    \end{multline*}
    
    Since $d \leq 2k+1$, the terms only appear in the third branch term $I_3$ when the assignment $2k+1$ is committed. The final set arises from the definition of $I_4$ which accrues hyperedges of degree $d \in \{3k+1, \dots, 4k\}$ across assignments in branches $m=1, 2, 3, 4$. The base case is fully established for the arbitrary $k>1$ case. The final step of the proof is to establish the extension of the $(m+1)^{st}$ branch:
    
     \begin{multline}
        \sum_{p=1}^{m+1} H_p + \sum_{p=3}^{m+1} I_p = \sum_{p=1}^{m} H_p + \sum_{p=3}^{m} I_p + H_{m+1} + I_{m+1} = \\ \underbrace{\sum_{i_1=1}^{mk} \mathbf{Z}^{(1)}_{l_{i_1} l'_{i_1}} + \sum_{i_1=1}^{mk} \sum_{i_2=i_1+1}^{mk} \mathbf{Z}^{(2)}_{l_{i_1} l'_{i_1} l_{i_2} l'_{i_2}} + \dots + \sum_{i_1=1}^{mk} \sum_{i_2=i_1+1}^{mk} ... \sum_{i_{mk}=i_{mk-1}+1}^{mk} \mathbf{Z}^{(mk)}_{l_{i_1} l'_{i_1} ... l_{i_{mk}} l'_{i_{mk}}}}_{\sum_{p=1}^{m} H_p + \sum_{p=3}^{m} I_p} \\ + \underbrace{\sum_{i_1=mk+1}^{(m+1)k} \mathbf{Z}^{(1)}_{l_{i_1} l'_{i_1}} + ... + \sum_{i_{2k}=mk+1}^{(m+1)k} \sum_{i_{2k-1}=1}^{i_{2k}-1} ... \sum_{i_1=1}^{i_2-1} \mathbf{Z}^{(2k)}_{l_{i_1} l'_{i_1} ... l_{i_{2k}} l'_{i_{2k}}}}_{H_{m+1}} + \underbrace{\sum_{d=2k+1}^{(m+1)k} \Xi^{(d)}_{m+1}}_{I_{m+1}}
    \end{multline}
    
    The $(m+1)k$ hyperedge multiplicities will be stratified into four groups:
    
    \begin{equation*}
        \{1\}, \{2, \dots, 2k\}, \{2k+1, \dots, mk\}, \{mk+1, \dots, (m+1)k\}    
    \end{equation*} 
    
    Just as in the singular $k=1$ case, the proof for the first and last groups are trivial. The unary terms are again evident from the first term in $H_{m+1}$, while the $mk+1 \leq d \leq (m+1)k$ terms in $I_{m+1}$ fully encapsulates the fourth group. The steps in the remaining two cases will follow that of the $k=1$ case. 
    
    First, assume $d \in \{2, \dots, 2k\}$. The extension of the $(m+1)^{st}$ branch uses exclusively the selection rule $H_{m+1}$: 
    
    \begin{multline}
        \underbrace{\sum_{i_1=1}^{mk} ... \sum_{i_{d-1}=i_{d-2}+1}^{mk} \sum_{i_d=i_{d-1}+1}^{mk}  \mathbf{Z}^{(d)}_{l_{i_1} l'_{i_1} ... l_{i_d} l'_{i_d}}}_{\sum_{p=1}^m H_m} + \underbrace{\sum_{i_d=mk+1}^{(m+1)k} \sum_{i_{d-1}=1}^{i_d-1} ... \sum_{i_1=1}^{i_2-1} \mathbf{Z}^{(d)}_{l_{i_1} l'_{i_1} ... l_{i_d} l'_{i_d}}}_{H_{m+1}} = \\ \sum_{i_1=1}^{(m+1)k} ... \sum_{i_{d-1}=i_{d-2}+1}^{(m+1)k} \sum_{i_d=i_{d-1}+1}^{(m+1)k}  \mathbf{Z}^{(d)}_{l_{i_1} l'_{i_1} ... l_{i_d} l'_{i_d}}
    \end{multline}
    
    Next, assume $d \in \{2k+1, \dots, mk\}$. These terms are captured in $I_{m+1}$ using each definition of $\Xi^{(d)}_{m+1}$:
    
    \begin{multline}
        \sum_{i_1=1}^{mk} ... \sum_{i_{d-1}=i_{d-2}+1}^{mk} \sum_{i_d=i_{d-1}+1}^{mk}  \mathbf{Z}^{(d)}_{l_{i_1} l'_{i_1} ... l_{i_d} l'_{i_d}} + \underbrace{\sum_{i_d=mk+1}^{(m+1)k} \sum_{i_{d-1}=1}^{i_d-1} ... \sum_{i_1=1}^{i_2-1} \mathbf{Z}^{(d)}_{l_{i_1} l'_{i_1} ... l_{i_d} l'_{i_d}}}_{I_{m+1}} = \\ \sum_{i_1=1}^{(m+1)k} ... \sum_{i_{d-1}=i_{d-2}+1}^{(m+1)k} \sum_{i_d=i_{d-1}+1}^{(m+1)k}  \mathbf{Z}^{(d)}_{l_{i_1} l'_{i_1} ... l_{i_d} l'_{i_d}}
    \end{multline}
    
    All four results together show that every degree hyperedge $1, \dots, (m+1)k$ is accounted for in the $(m+1)^{st}$ branch, thus proving the induction hypothesis:
    
    \begin{equation*}
        \sum_{p=1}^{m+1} H_p + \sum_{p=3}^{m+1} I_p = f(X | \mathbf{Z}^{(1)}, \mathbf{Z}^{(2)}, \dots \mathbf{Z}^{((m+1)k)})
    \end{equation*}
    
    The $M^{th}$ branch completes the degree $n_1$ assignment problem of size $n_1$. For any $k \in \{1, 2, \dots n_1\}$, the selection and aggregation rules yield the full degree $n_1$ assignment problem objective:
    
    \begin{equation*}
        \sum_{p=1}^{M} H_p + \sum_{p=3}^{M} I_p = f(X | \mathbf{Z}^{(1)}, \mathbf{Z}^{(2)}, \dots \mathbf{Z}^{(n_1)})
    \end{equation*}
    
\end{proof}

\subsection*{Convergence \& Exactness of \textit{EHGM}} \label{SI:conv_exact}

\newtheorem{theorem2}{Theorem}[section]

\begin{theorem}

\textit{EHGM} (algorithm \ref{Exact_HGM}) will converge to a globally optimal solution of the following hypergraph matching optimization problem given input $k \in \{1, 2, \dots, n_1\}$:

\begin{multline}
    \underset{X \in \mathcal{X}}{\text{min}} \sum_{l_1=1}^{n_1} \sum_{l'_1=1}^{n_2} \mathbf{Z}^{(1)}_{l_1 l'_1} x_{l_1 l'_1} + \sum_{l_1=1}^{n_1} \sum_{l'_1=1}^{n_2} \sum_{l_2=l_1+1}^{n_1} \sum_{l'_2=1}^{n_2} \mathbf{Z}^{(2)}_{l_1 l'_1 l_2 l'_2} x_{l_1 l'_1} x_{l_2 l'_2}  \\ + \sum_{l_1=1}^{n_1} \sum_{l'_1=1}^{n_2} \sum_{l_2=l_1+1}^{n_1} \sum_{l'_2=1}^{n_2} \sum_{l_3=l_2+1}^{n_1} \sum_{l'_3=1}^{n_2} \mathbf{Z}^{(3)}_{l_1 l'_1 l_2 l'_2 l_3 l'_3} x_{l_1 l'_1} x_{l_2 l'_2} x_{l_3 l'_3}  + ... \\ + \sum_{l_1=1}^{n_1} \sum_{l'_1=1}^{n_2}  ... \sum_{l_{n_1}=l_{n_1-1}+1}^{n_1} \sum_{l'_{n_1}=1}^{n_2} \mathbf{Z}^{(n_1)}_{l_1 l'_1 \dots l_{n_1} l'_{n_1}} x_{l_1 l'_1} \dots x_{l_{n_1} l'_{n_1}}
\end{multline}

where $\mathcal{X}$ is defined:

\begin{equation}
    \mathcal{X} = \{X \in \{0,1\}^{n_1 \times n_2}: \forall j, \sum_{i = 1}^{n_1} x_{ij} \leq 1, \forall i \sum_{j = 1}^{n_2} x_{ij} = 1\}
\end{equation}

\end{theorem}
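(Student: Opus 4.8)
The plan is to argue correctness of \textit{EHGM} by the two standard branch-and-bound ingredients: (i) the pruning rule is \emph{safe} — it never deletes every branch leading to a global optimum — and (ii) the search \emph{terminates}, performing a finite implicit enumeration of $\mathcal{X}$. The bridge between the algorithmic state and the objective is the decomposition theorem of the previous subsection: its inductive proof in fact shows that after committing branch $m$ the running cost satisfies $\tilde C = \sum_{p=1}^{m} H_p + \sum_{p=3}^{m} I_p$, which equals the sum of exactly those $\mathbf{Z}^{(d)}$-entries whose vertex indices all lie in $\{1,\dots,mk\}$; write this partial objective as $f_m(X)$, so $f_M(X)=f(X)$ on a complete assignment $X$, and $f_m(X)$ depends only on the first $mk$ committed assignments.

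First I would pin down a valid bound. Since the $\mathbf{Z}^{(d)}$ are dissimilarity tensors we take their entries nonnegative (this is implicit in the statement and is exactly what makes the bound valid); then every $H_p$, $I_p$, and $\Xi^{(d)}_m$ is nonnegative, $\tilde C$ is nondecreasing along any root-to-leaf path, and for every completion $X$ of the partial assignment reached at branch $m$ one has $f(X) = f_m(X) + (\text{terms touching a vertex}>mk) \ge f_m(X)$. The quantity tested inside \textsc{Enqueue} at level $m$ is $\tilde C + H_m(\mathbf{k}\mid\tilde{\mathbf x})$ with $\tilde C$ the cost through branch $m-1$; since $\tilde C + H_m \le \tilde C + H_m + I_m = f_m(X) \le f(X)$, this test value is a genuine lower bound on $f$ over all completions. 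Safety of pruning then follows: \textsc{Enqueue} keeps a $k$-tuple iff it is disjoint from $\tilde{\mathbf x}$ (the one-to-one constraint of $\mathcal X$ restricted to assigned vertices) and its test value is $<C^*$; if it is discarded, every completion has $f(X)\ge C^*$, so no completion beats the incumbent. Conversely, fixing a global optimum $X^*$ with value $C^*_{\mathrm{opt}}=f(X^*)$, its restrictions pass the disjointness filter and at each level $m$ its test value is $\le f_m(X^*)\le C^*_{\mathrm{opt}}\le C^*$, strictly unless the incumbent already equals $C^*_{\mathrm{opt}}$; so either the path to $X^*$ is never pruned, or a global optimum has already been recorded, and the incumbent update (replacing $\mathbf{x}^*$ only by solutions of value $\le C^*$) never overwrites a recorded optimum with anything worse.

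For termination I would observe that \textsc{Visit}/\textsc{Backtrack} explores a rooted tree of depth $\le M=n_1/k$ in which each node's queue $\mathbf{Q}_m$ is a finite subset of $\mathbf{P}$, each element is popped once, and \textsc{Backtrack} deletes it from the parent queue once its subtree is exhausted; hence every \texttt{while} loop, including the top-level one on $\mathbf{Q}_1$, runs finitely often and the algorithm halts. At halting, by the safety argument some root-to-leaf path realizing a global optimum $X^*$ has been fully traversed (or a global optimum was recorded earlier); at its leaf $m=M$ the decomposition theorem gives $\tilde C=f(X^*)$, triggering the updates $\mathbf{x}^*\leftarrow X^*$, $C^*\leftarrow f(X^*)$. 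Thus the returned pair $(\mathbf{x}^*,C^*)$ is globally optimal.

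The main obstacle is the bound step: rigorously certifying that the cheap \textsc{Enqueue} test $\tilde C + H_m$ — which omits $I_m$ and all later terms — is nonetheless a lower bound on $f$ over the entire subtree. This is precisely where the decomposition theorem (to identify $\tilde C$ with the honest partial objective $f_m$) and nonnegativity of the dissimilarity entries must be combined; without nonnegativity the bound, and hence the theorem, fails, so that hypothesis has to be stated explicitly. A minor secondary point is the comparison convention on the optional input $C_0$ (strict versus non-strict inequalities): taking $C_0=+\infty$ when no bound is supplied, or any valid upper bound otherwise, is what lets the persistence-of-an-optimum argument go through without an edge-case exception.
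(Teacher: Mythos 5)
Your proposal is correct and follows essentially the same route as the paper's proof: termination by finite implicit enumeration of the branching tree via \textit{Visit}/\textit{Backtrack}, identification of the accumulated cost $\tilde{C}$ with the objective through the decomposition theorem, and safety of pruning from the assumed nonnegativity of the dissimilarity tensors making $\tilde{C}$ monotonically nondecreasing along any path. Your treatment of the pruning step is in fact more explicit than the paper's (which asserts rather than argues that no optimal path is eliminated), and your observation that nonnegativity must be stated as a hypothesis matches the paper's own parenthetical assumption.
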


\begin{proof}
First, we will show \textit{EHGM} converges, then it will be proven that the converged solution is globally optimal. 

The search begins with initializing queue $\mathbf{Q}_1 = \mathbf{P}$. The algorithm terminates with the exhaustion of $\mathbf{Q}_1$. Each set $\mathbf{Q}_m \subset \mathbf{P}$ contains feasible \textit{k}-assignments conditioned on the assignment constraints and costs $\Tilde{C}, C^*$. \textit{Backtrack} (Algorithm \ref{Backtrack}) removes $\mathbf{x}_m$ from $\mathbf{Q}_{m-1}$ upon enumeration of $\mathbf{Q}_m$. The recursion then falls back to selecting from branch $m-1$, eventually exhausting $\mathbf{Q}_{m-1}$ just as in the enumeration of $\mathbf{Q}_m$. This recursion continues until the first branch $\mathbf{k}_1 \in \mathbf{Q}_1$ is removed, signaling the exploration of all assignments originating with the \textit{k}-tuple $\mathbf{k}_1$. The exploration is repeated for each $\mathbf{k}_1 \in \mathbf{Q}_1$. Thus, all possible assignments $X \in \mathcal{X}$ are explored via the branching scheme. 

Assignments accrue a monotonically increasing cost $\Tilde{C}$ to be compared to $C^*$ with accompanying assignment $\mathbf{x}^*$ at each branch. A complete assignment then drops the last \textit{k} assignments from $\Tilde{\mathbf{x}}$, initializing the backwards recursion, emptying $\mathbf{Q}_m$ until $\mathbf{k}^{(n_k)}_{m-1} \in \mathbf{Q}_{m-1}$ is exhausted. There are at most $|\mathbf{Q}_m| \leq n_k$ viable permutations at branch \textit{m}. Each possible branch is evaluated from $\Tilde{\mathbf{x}} = [
\mathbf{k}_1, \mathbf{k}_2, \dots \mathbf{k}_{m-1}]$. The $(m-1)^{st}$ branch $\mathbf{k}_{m-1} \in \mathbf{Q}_{m-1}$ is removed from $\mathbf{Q}_{m-1}$ upon exhaustion of $\mathbf{Q}_m$: 

\begin{align*}
    \Tilde{\mathbf{x}}^{(m)}_1 &= [\mathbf{k}^{(1)}_1, \mathbf{k}^{(1)}_2, \dots \mathbf{k}^{(1)}_{m-1}, \mathbf{k}^{(1)}_m] \\ \Tilde{\mathbf{x}}^{(m)}_2 &= [\mathbf{k}^{(1)}_1, \mathbf{k}^{(1)}_2, \dots \mathbf{k}^{(1)}_{m-1}, \mathbf{k}^{(2)}_m] \\ 
    ... \\
    \Tilde{\mathbf{x}}^{(m)}_{n_k} &= [\mathbf{k}^{(1)}_1, \mathbf{k}^{(1)}_2, \dots \mathbf{k}^{(1)}_{m-1}, \mathbf{k}^{(n_k)}_m] \\ 
\end{align*}

Each of the $n_k$ possible final branches from $\mathbf{k}^{(1)}_{m-1}$ is explored, then $\mathbf{k}^{(1)}_{m-1}$ is removed from $\mathbf{Q}_{m-1}$. 



The process follows for the $M^{th}$ branch, exhausting viable assignment sets until $\mathbf{k}_{M-1}$ is removed. The recursion follows inductively back to the exhaustion of $\mathbf{Q}_1$, signaling the end of the search. Thus, all possible assignments $X \in \mathcal{X}$ are explored via the branching scheme. 

The convergent and exhaustive algorithm will yield a globally optimal solution $C^* = f(\mathbf{x}^*)$ after exhausting $\mathbf{Q}_1$. As proven above the additive decomposition of the cost structure (equation \ref{eqn:additive}) is proven to be satisfied by summing all selection and aggregation rule terms. Assume an uninformed initialization $C^* = \infty$. Then the first pass will greedily take the best permutation from the first set $\mathbf{Q}_1$: $\mathbf{k}^{(1)}_1$, and the best from the second set given it does not conflict with $\mathbf{k}^{(1)}_1$: $\mathbf{k}^{(2|1)}_2$. This process will continue until the first complete assignment: $\Tilde{\mathbf{x}} = [\mathbf{k}^{(1)}_1, \mathbf{k}^{(2|1)}_2, \mathbf{k}^{(3|2,1)}_3, \dots, \mathbf{k}^{(M|(M-1), \dots, 1)}_M]$ with $\Tilde{C} = f(\Tilde{\mathbf{x}})$. The first \textit{Bracktrack} removes $\mathbf{k}^{(M|(M-1), \dots, 1)}_M$, and the $M^{th}$ \textit{Visit} call will exhaust $\mathbf{Q}_M$. Subsequent \textit{Enqueue} calls will limit only allow branches that satisfy both the assignment constraints and the updated selection rule cost (Algorithm \ref{Enqueue}). This follows that any \textit{k}-tuple of assignments $\mathbf{k}^{(j)}_m$ such that for $\Tilde{\mathbf{x}} = [\mathbf{k}^{(j_1)}_1, \mathbf{k}^{(j_2|j_1)}_2, \dots, \mathbf{k}^{(j_d|j_{(m-1)}, \dots j_1)}_m]$:

\begin{equation*}
    \Tilde{C} + H_m(\Tilde{\mathbf{x}}_{m-1}, \mathbf{k}^{(j_d|j_{(m-1)}, \dots j_1)}_m) < C^*
\end{equation*}

The additive decomposition of the objective paired with the assumed non-negativity of the dissimilarity tensors $\mathbf{Z}^{(j)}$ results in each branch monotonically increasing $\Tilde{C}$: 

\begin{equation*}
    \Tilde{C} + H_m(\Tilde{\mathbf{x}}_{m-1}, \mathbf{k}^{(j_d|j_{(m-1)}, \dots j_1)}_m) + I_m(\mathbf{k}^{(j_1)}_1, \mathbf{k}^{(j_2|j_1)}_2, \dots, \mathbf{k}^{(j_d|j_{(m-1)}, \dots j_1)}_m) \geq \Tilde{C}
\end{equation*}

The convergent search will thus eliminate all paths that are not globally optimal. Incrementally updating the reserved solution $\mathbf{x}^*$ with cost $C^*$ expedites convergence as each replacement is necessarily a better solution. The resulting $\mathbf{x}^*$ and corresponding cost $C^*$ are such that at no other full assignment $\Tilde{\mathbf{x}}$ can replace $\mathbf{x}^*$, by definition a globally optimal solution of \textit{f}. 

\end{proof}

\subsection*{Model Fitting} \label{SI:DDAPs}

Expert annotations are used to derive features such that the correct assignment consistently achieves a minimal cost across the training set. Features can be engineered and analyzed in context of point set matching just as in traditional supervised learning tasks. 

Features are expressed as attributes over hyperedge multiplicities $d = 1, 2, \dots, n_1$. Hyperedge features $g^{(d)}_s$, $s=1,\dots,n_d$ are given as input. Each feature $g_s^{(d)}$ assumes a Gaussian distribution, and if $n_d \geq 2$ the features are modeled as a multivariate Gaussian distribution. Measurements from the data are used to derive estimates of the parameters of the Gaussian distributions: $\mathbf{mu}$ and $\mathbf{Sigma}$. The most common application in heuristic approaches is to use the previous frame's feature values as the centers of the distributions. This standard approach is effective for features that vary minimally, frame-to-frame. However, certain angle measurements may vary greatly between frames. Mean estimates across the training data can better account for macroscopic patterns in features. The variances are then estimated from the feature values across the training set. 

The dissimilarity costs arise from the Mahalanbonis distance between a hypothesized assignment's feature measurements and the estimated template mean values scaled by estimated covariance matrix. The dissimilarity tensors $\mathbf{Z}^{(d)}$ are expressed as a function of the $n_d$ features of hyperedge \textit{d}. A partial assignment up to degree \textit{d}: $[(l_1, \dots, l_d) \mapsto (l'_1, \dots, l'_d)]$ invokes a cost according to the $n_d$ features: $\sum_{s=1}^{n_d} g_s^{(d)}$. The expected values: $\sum_{s=1}^{n_d} \bar{g}_s^{(d)}$ are calculated in aggregate from training data for higher variance patterns:

\begin{equation} \label{eqn:means}
    \Bar{g}_s^{(d)} = \frac{\sum_{L=1}^N g_s^{(d)}(X_L, \mathbf{X}_L)}{N}
\end{equation}

where $X_L$ and $\mathbf{X}_L$ are the correct permutation and observed point set, respectively, for sample $L$. The variance-covariance matrix uses estimated means to estimate variances and covariances among feature measurements in the annotated data:

\begin{equation}
\label{eqn:covs}
    \hat{\sigma}^{(d)}_{a,b} = \sum_{L=1}^N (g_a^{(d)}(X_L, \mathbf{X}_L) - \Bar{g}_a^{(d)})(g_b^{(d)}(X_L, \mathbf{X}_L) - \Bar{g}_b^{(d)})
\end{equation}

\[
\hat{\mathbf{\Sigma}}^{(d)}_{g} = \left| \begin{bmatrix}
\hat{\sigma}^{(d)}_{1,1}  & \hat{\sigma}^{(d)}_{1,2} & ... & \hat{\sigma}^{(d)}_{1,n_d} \\
\hat{\sigma}^{(d)}_{2,1} & \hat{\sigma}^{(d)}_{2,2} & ... & \hat{\sigma}^{(d)}_{2,n_d} \\
... & ... & ... & ... \\
\hat{\sigma}^{(d)}_{n_d,1} & \hat{\sigma}^{(d)}_{n_d,2} & ... & \hat{\sigma}^{(d)}_{n_d, n_d} \end{bmatrix} \right| 
\]

The selection rule tensor dissimilarity tensors $\mathbf{Z}^{(d)} \in R^{\underbrace{n \times n, \dots, \times n}_{2d}}$ use both sets of estimates to compute costs. The Mahalanobis distance is used to describe the scaled distance between the observed attributed hyperedge to an estimated feature description. Let $\mathbf{g}^{(d)} = [g_1^{(d)}, g_2^{(d)}, \dots, g_{n_d}^{(d)}]'$ and $\bar{\mathbf{g}}^{(d)} = [\bar{g}_1^{(d)}, \bar{g}_2^{(d)}, \dots, \bar{g}_{n_d}^{(d)}]'$ 

\begin{equation}
    \mathbf{Z}^{(d)}_{l_1 l'_1 l_2 l'_2 \dots l_d l'_d} = (\mathbf{g}^{(d)} - \bar{\mathbf{g}}^{(d)})' (\hat{\mathbf{\Sigma}}^{(d)}_{g})^{-1} (\mathbf{g}^{(d)} - \bar{\mathbf{g}}^{(d)})
\end{equation}

The traditional approach uses the labeled coordinates in the prior frame to build corresponding prior frame feature measurements. These prior frame feature measurements then serve as the estimated center of the Gaussian distribution. The covariance matrix estimation follows accordingly, in which the variation in frame-to-frame differences is estimated from sample data. 

\subsection*{Posture Modeling} \label{SI:Modeling}

Embryonic \textit{C. elegans} posture modeling used the aforementioned template hypergraph for quantifying hypothesized seam cell identities throughout the search process. The developing embryo elongates and as a result becomes more coiled due to the constraining eggshell. As such, the template hypergraphs are updated according to binned time intervals. Parameters are estimated from data according to the point in development between first image and hatch. 

Each image is captured at time $t$ with $n_1=20$ located nuclei centroids. The coordinates can be stored as $\mathbf{X} \in R^{n \times 3}$ which $\mathbf{X}_i = [x_i, y_i, z_i]$ representing the $i^{th}$ centroid in $R^3$. The seam cells are ordered posterior to anterior: \textit{TL}, \textit{TR}, \dots, \textit{H0L}, \textit{H0R}. Then let $1, 3, 5, \dots 19$ be in the indices of the left side, and $2, 4, 6, \dots 20$ be the indices of the right side. The seam cells are paired $(1,2)$ for the tail pair, then $(2,3), (4,5), (6,7), \dots (19, 20)$ for the body pairs. Let $\mathbf{L} = (\mathbf{l}_1, \mathbf{l}_2, \dots, \mathbf{l}_{10})$ denote the left side nuclei locations, and similarly $\mathbf{R}$ the right side nuclei locations. 

The sampled worm embryos develop at similar rates. However, the occurrence of the first twitch, a point in development that triggers rapid physical changes, varies slightly embryo-to-embryo. As a result, we apply a time normalization in effort to compare feature measurements from images of the different specimens. Each embryo's time to first twitch is measured $s_{w}$, as well as hatch time $h_w$ for sample embryo $w = 1, 2, \dots 16$. The time points for each sample are indexed $k = 1, 2, \dots n_w$. Each volume's imaging time $t_{wk}$ is normalized to the $[0,1]$ range via Eq~\ref{eqn:time_norm}. Each normalized time point $z_{wk}$ is scaled such that $z_{wk} = 0$ represents first twitch, and $z_{wk} = 1$ hatching. 

\begin{equation}
\label{eqn:time_norm}
    z_{wk} = \frac{t_{wk} - s_w}{h_w - s_w}
\end{equation}

Features used in each of the three models: \textit{Sides}, \textit{Pairs}, and \textit{Posture} are described with accompanying plots of their distributions throughout both development and within the embryo. All plots will feature the normalized time of observation $z_{wk}$ on the horizontal axis. The vertical axis unit will vary between angular measurements, distances (in $\mu m$), and ratios of each which have no unit. Plots with multiple subplots in will measure features calculated at using segments of the embryo posterior to anterior. The left-most plot will depict calculations using the tail pair while the right-most will depict the feature ending with the \textit{H0} pair at the anterior of the embryo. 

\subsubsection*{\textit{Sides}} \label{SI:QAP}

The graphical model uses scaled distances between pairs of nuclei. The first feature to analyze is the distance between paired cell nuclei (S1 Fig~\ref{fig:qap}-A): 

\begin{equation} \label{eqn:PD}
    \mathbf{PD}_i = \|\mathbf{L}_i - \mathbf{R}_i \|_2
\end{equation}

The distance between the left and right seam cell nuclei within a pair can be interpreted as the width of the embryo measured at the sampled seam cell locations. The first (left-most) subplot of S1 Fig~\ref{fig:qap}-A illustrates distances in microns between nuclei centroids of the tail pair for each observation. The tail pair distance is used for the initial pair selection rule $H_1$ across all models. The second set of distances form along the left and right sides of the worm (S1 Fig~\ref{fig:qap}-B). The lengths of chords between successive nuclei on each side are calculated: $\|\mathbf{L}_{i+1} - \mathbf{L}_i \|_2$ and $\|\mathbf{R}_{i+1} - \mathbf{R}_i \|_2$. Similar to the pair distances, side length observations are highly variant. 

\begin{figure}[!ht]
\centering
\includegraphics[width=\textwidth]{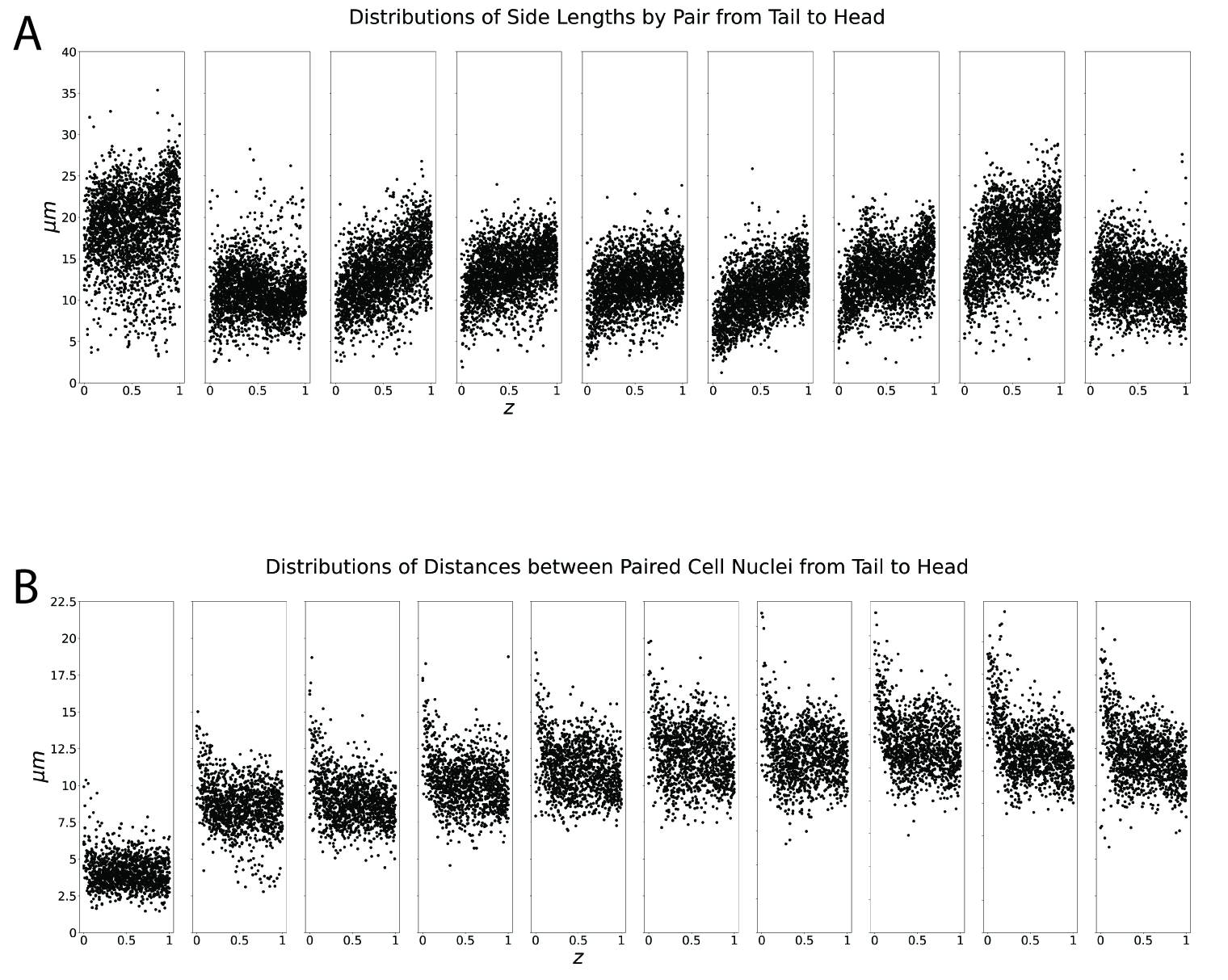}
\caption{\textbf{\textit{Sides} model features.} A) Distances between nuclei of lateral pairs. Notably, the tail pair distance (left-most panel) is constant throughout imaging. The tail pair distance informs the initial pair selection rule $H_1$. B) Chord lengths along left and right sides of the posture. Both quadratic features show high variance.}
\label{fig:qap}
\end{figure}

\subsubsection*{\textit{Pairs}} \label{SI:Pairs}

The \textit{Pairs} model uses hyperedges connecting two or three pairs of nuclei (four or six nuclei). The first four features measure pair-to-pair variation, while the latter two features use triplets of pairs to measure angles formed by the midpoints of the three pairs. The first pair-to-pair feature extends upon the use of pair distances to better describe the coiled worm. The ratio of sequential pair distances models the variation in width throughout the assigned nuclei (Fig~\ref{fig:pairs}-A): 

\begin{equation} \label{eqn:PDR}
    PDR_i = \frac{PD_i}{PD_{i+1}}
\end{equation}

Each feature's estimated mean is slightly greater than $1$, indicating that, on average, the worm is widening from tail to head. Another easily interpreted distance is the length of the chords connecting sequential pair midpoints. This is a more robust measure of worm length as side lengths vary more based upon the worm's folding (S1 Fig~\ref{fig:pairs}-B):

\begin{equation} \label{eqn:MD}
    MD_i = \| \mathbf{M}_{i+1} - \mathbf{M}_{i} \|_2
\end{equation}

The length of the chords connecting sequential pair midpoints is a more robust measure of worm length as side lengths vary more based upon the worm's folding. The cosine similarity is used to assess the degree to which sequential sides are pointing in the same direction (S1 Fig~\ref{fig:pairs}-C): 

\begin{equation} \label{eqn:CS}
    \phi_i =  \frac{(\mathbf{R}_{i+1} - \mathbf{R}_{i}) \cdot (\mathbf{L}_{i+1} - \mathbf{L}_{i})}{\| \mathbf{R}_{i+1} - \mathbf{R}_{i} \|_2 \| \mathbf{L}_{i+1} - \mathbf{L}_{i} \|_2} \in [-1, 1]
\end{equation} 

The final two pair-to-pair \textit{Pairs} features attempt to model two different types of \textit{twist} in the posture. The lateral and axial twists measures angles of rotation from lateral and posterior views, respectively (S1 Fig~\ref{fig:pairs}-D). 

\begin{align} \label{eqn:lat_ax}
    \mathbf{b}_1 &= \frac{\mathbf{L}_{i+1} - \mathbf{L}_i}{\| \mathbf{L}_{i+1} - \mathbf{L}_i \|_2} \\
    \mathbf{b}_2 &= \frac{\mathbf{L}_{i} - \mathbf{R}_i}{\| \mathbf{L}_{i} - \mathbf{R}_i \|_2} \\ 
    \mathbf{b}_3 &= \frac{\mathbf{R}_{i} - \mathbf{R}_{i+1}}{\| \mathbf{R}_{i} - \mathbf{R}_{i+1} \|_2} \\
    \mathbf{b}_4 &= \frac{\mathbf{R}_{i+1} - \mathbf{L}_{i+1}}{\| \mathbf{R}_{i+1} - \mathbf{L}_{i+1} \|_2} \\
    \mathbf{n}_1 &= \mathbf{b}_1 \times \mathbf{b}_2 \\
    \mathbf{n}_2 &= \mathbf{b}_2 \times \mathbf{b}_3 \\
    \mathbf{n}_3 &= \mathbf{b}_3 \times \mathbf{b}_4 \\
    \mathbf{c}_1 &= < \mathbf{n}_1 \times \mathbf{n}_2, \mathbf{b}_2 > \\
    \mathbf{c}_2 &= < \mathbf{n}_1, \mathbf{n}_2 > \\
    \psi_i &= \frac{1}{\pi} atan2(< \mathbf{n}_1 \times \mathbf{n}_2, \mathbf{b}_2>, < \mathbf{n}_1, \mathbf{n}_2>)) \\
\end{align} 

Axial twists present between a sequence of two pairs calculates the angle obtained by projecting the chord linking pairs onto each other (S1 Fig~\ref{fig:pairs}-E):

\begin{align} \label{eqn:ax}
    \mathbf{b}_1 &= \frac{\mathbf{L}_{i} - \mathbf{L}_{i+1}}{\| \mathbf{L}_{i} - \mathbf{L}_{i+1} \|_2} \\
    \mathbf{b}_2 &= \frac{\mathbf{R}_{i} - \mathbf{L}_i}{\| \mathbf{R}_{i} - \mathbf{L}_i \|_2} \\ 
    \mathbf{b}_3 &= \frac{\mathbf{R}_{i+1} - \mathbf{R}_{i}}{\| \mathbf{R}_{i+1} - \mathbf{R}_{i} \|_2} \\
    \mathbf{b}_4 &= \frac{\mathbf{L}_{i+1} - \mathbf{R}_{i+1}}{\| \mathbf{L}_{i+1} - \mathbf{R}_{i+1} \|_2} \\
    \mathbf{n}_1 &= \mathbf{b}_1 \times \mathbf{b}_2 \\
    \mathbf{n}_2 &= \mathbf{b}_2 \times \mathbf{b}_3 \\
    \mathbf{n}_3 &= \mathbf{b}_3 \times \mathbf{b}_4 \\
    \mathbf{c}_1 &= < \mathbf{n}_2 \times \mathbf{n}_3, \mathbf{b}_3 > \\
    \mathbf{c}_2 &= < \mathbf{n}_2, \mathbf{n}_3 > \\
    \tau_i &= \frac{1}{\pi} atan2(< \mathbf{n}_2 \times \mathbf{n}_3, \mathbf{b}_3>, < \mathbf{n}_2, \mathbf{n}_3>)) \\
\end{align} 

Angles along sides of the worm formed by triples of sequential nuclei approximate bend in the worm along each side. These bend angles are highly variant, especially frame-to-frame, in the same manner as side lengths in \textit{Sides} (S1 Fig~\ref{fig:qap}-B). Angles formed by pair midpoints exacerbate the computational burden as six nuclei are required, compare to three in a typical angle calculation, but the midpoint based angles are less variant than angles of each side (S1 Fig~\ref{fig:pairs}-F):

\begin{equation} \label{eqn:bend}
    \Theta_i = \frac{180}{\pi} \arccos{\frac{< \mathbf{M}_{i+1} - \mathbf{M}_{i}, \mathbf{M}_{i+2} - \mathbf{M}_{i+1} >}{\|  \mathbf{M}_{i+1} - \mathbf{M}_{i} \|_2 \| \mathbf{M}_{i+2} - \mathbf{M}_{i+1} \|_2}}
\end{equation}

Each angle $\Theta \in [0, 180]$ where $0$ would denote the worm perfectly folded upon itself, and $180$ would define a flat worm.  A second set of angles aims to approximate the posterior to anterior bend in the worm. The angles $\zeta_i$ are defines as the angles formed by fitted planes intersecting between pair midpoints (S1 Fig~\ref{fig:pairs}-G):

\begin{equation} \label{eqn:planar}
    \zeta_i = \frac{180}{\pi} \frac{< (\mathbf{R}_{i+1} - \mathbf{L}_{i+1}) \times (\mathbf{M}_{i+1} - \mathbf{M}_i), ((\mathbf{R}_{i+1} - \mathbf{L}_{i+1}) \times (\mathbf{M}_{i+1} - \mathbf{M}_{i+1})) >}{\| (\mathbf{R}_{i+1} - \mathbf{L}_{i+1}) \times (\mathbf{M}_{i+1} - \mathbf{M}_i) \|_2 \| (\mathbf{R}_{i+1} - \mathbf{L}_{i+1}) \times (\mathbf{M}_{i+1} - \mathbf{M}_{i+1}) \|_2}
\end{equation}

\begin{figure}[!ht]
\centering
\includegraphics[width=\textwidth]{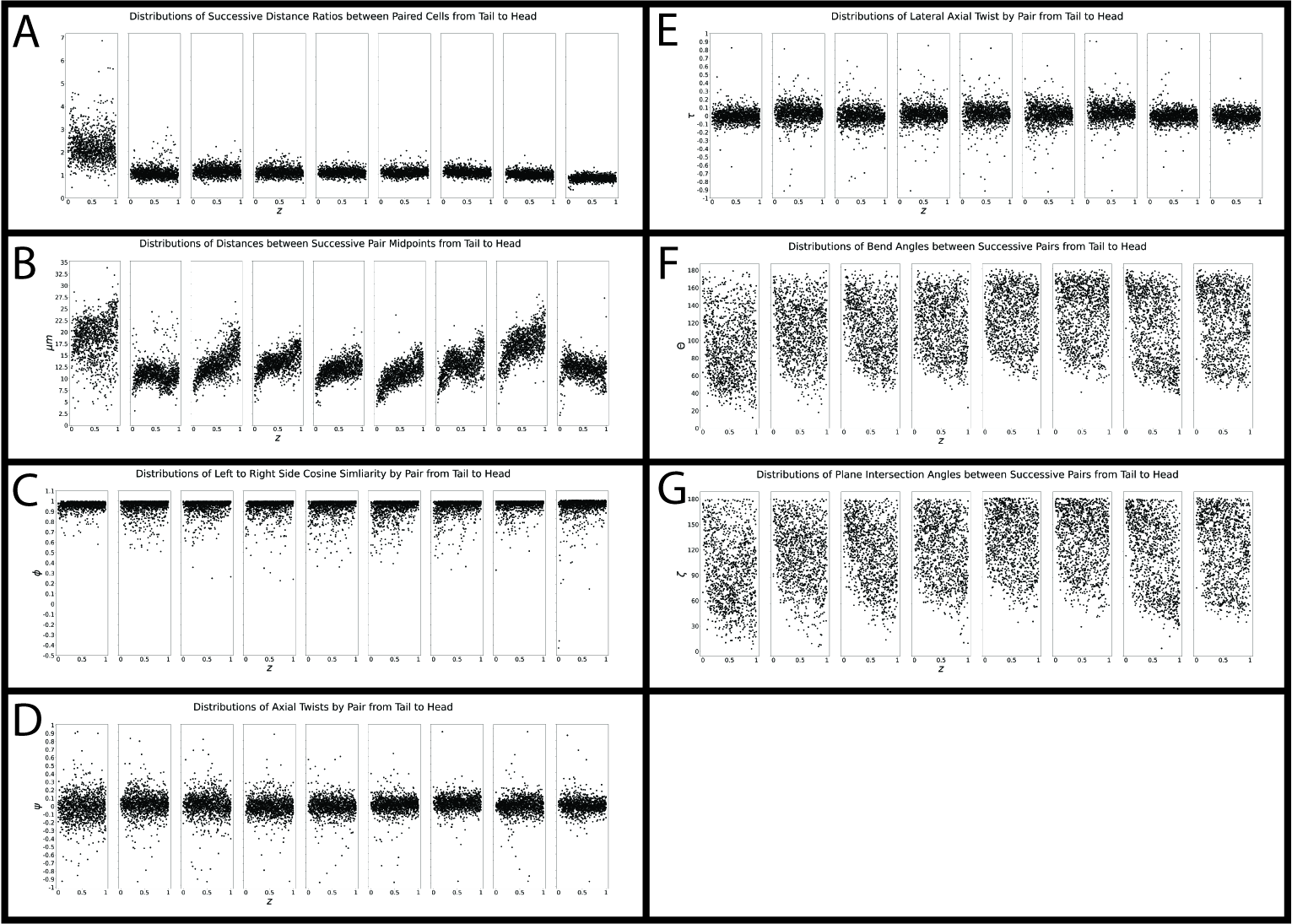}
\caption{\textbf{\textit{Pairs} model features.} A) Ratios of pair distances (Eq~\ref{eqn:PDR}). B) Distance between successive pair midpoints (Eq~\ref{eqn:MD}). C) Cosine similarities between successive left and right sides (Eq~\ref{eqn:CS}). D) Lateral axial twist angles (Eq~\ref{eqn:lat_ax}). E) Axial twist angles (Eq~\ref{eqn:ax}). F) Midpoint bend angles (Eq~\ref{eqn:bend}). G) Planar intersection angles (Eq~\ref{eqn:planar}).}
\label{fig:pairs}
\end{figure}

\subsubsection*{\textit{Posture}} \label{SI:Posture}

The \textit{Posture} model is comprised of all \textit{Pairs} features as well as the features defined by the summations of each local feature measurement throughout the hypothesized posture. Full posture features give insight into the changes in the embryo's shape throughout late-stage embryogenesis (S1 Fig~\ref{fig:posture}). Worm length follows an approximately logarithmic pattern.Total curvature follows a negative exponential pattern. Earlier on the worm is fatter and cannot bend as much. The worm elongates during development, allowing for sharper bends.

\begin{figure}[!ht]
\centering
\includegraphics[width=\textwidth]{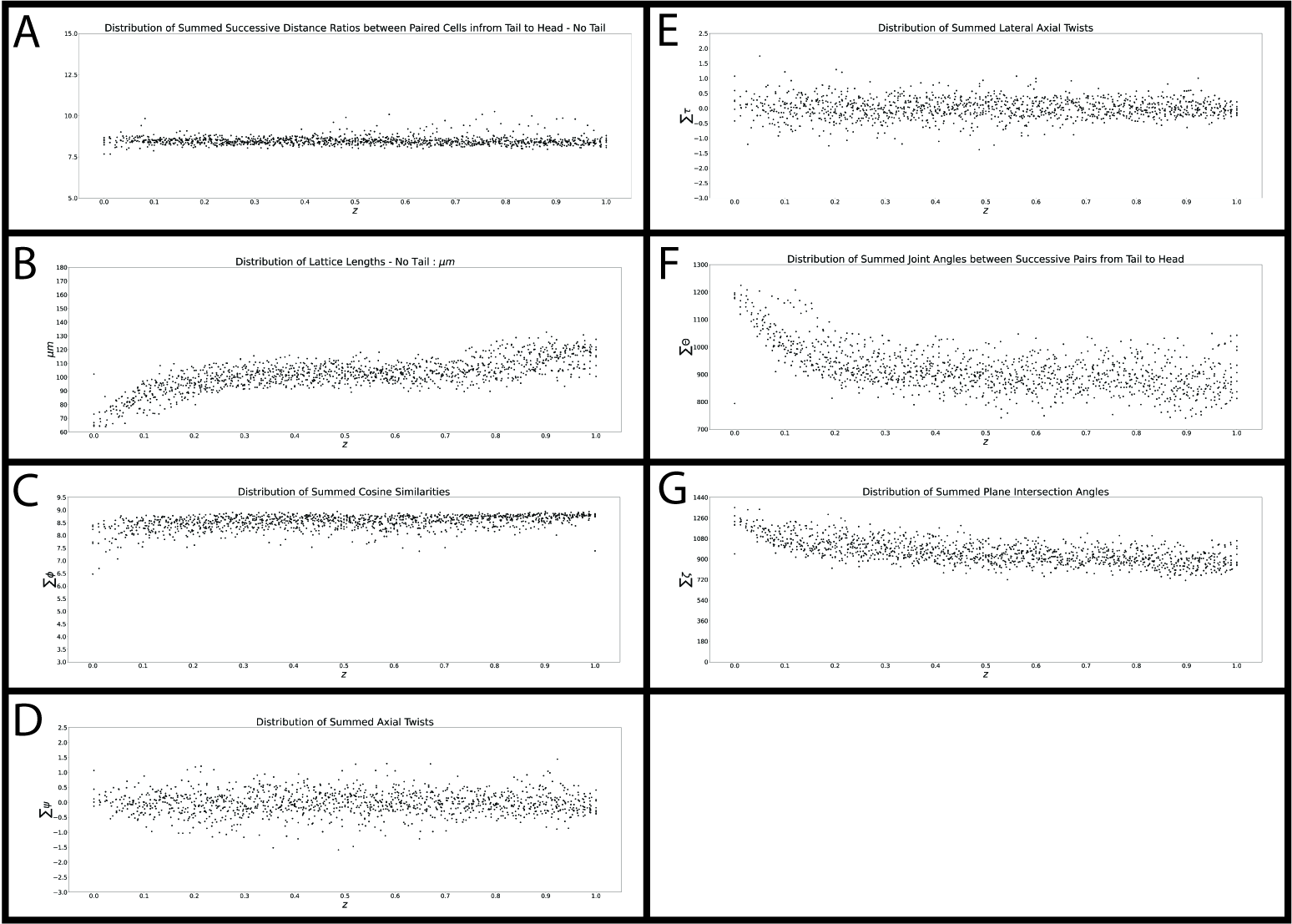}
\caption{\textbf{\textit{Posture} model features include all \textit{Pairs} features and posture-wide versions of \textit{Pairs} features.} A) Summed ratios of pair distances (Eq~\ref{eqn:PDR}). B) Summed distances between successive pair midpoints (Eq~\ref{eqn:MD}). C) Summed cosine similarities between successive left and right sides (Eq~\ref{eqn:CS}). D) Summed lateral axial twist angles (Eq~\ref{eqn:lat_ax}). E) Summed axial twist angles (Eq~\ref{eqn:ax}). F) Summed midpoint bend angles (Eq~\ref{eqn:bend}). G) Summed planar intersection angles (Eq~\ref{eqn:planar}).}
\label{fig:posture}
\end{figure}

\begin{thebibliography}{10}

\bibitem{zhou_factorized_2016}
Zhou F, Torre FDl.
\newblock Factorized {Graph} {Matching}.
\newblock IEEE Transactions on Pattern Analysis and Machine Intelligence.
  2016;38(9):1774--1789.
\newblock doi:{10.1109/TPAMI.2015.2501802}.

\bibitem{zhang_kergm_2019}
Zhang Z, Xiang Y, Wu L, Xue B, Nehorai A.
\newblock {KerGM}: {Kernelized} {Graph} {Matching}.
\newblock In: Wallach H, Larochelle H, Beygelzimer A, Alché-Buc Fd, Fox E,
  Garnett R, editors. Advances in {Neural} {Information} {Processing}
  {Systems}. vol.~32. Curran Associates, Inc.; 2019. p.~1.
\newblock Available from:
  \url{https://proceedings.neurips.cc/paper/2019/file/cd63a3eec3319fd9c84c942a08316e00-Paper.pdf}.

\bibitem{duchenne_tensor-based_2010}
Duchenne O, Bach F, Kweon IS, Ponce J.
\newblock A {Tensor}-{Based} {Algorithm} for {High}-{Order} {Graph} {Matching}.
\newblock IEEE TRANSACTIONS ON PATTERN ANALYSIS AND MACHINE INTELLIGENCE. 2010;
  p.~13.

\bibitem{chertok_efficient_2010}
Chertok M, Keller Y.
\newblock Efficient {High} {Order} {Matching}.
\newblock IEEE Transactions on Pattern Analysis and Machine Intelligence.
  2010;32(12):2205--2215.
\newblock doi:{10.1109/TPAMI.2010.51}.

\bibitem{lee_hyper-graph_2011}
Lee J, Cho M, Lee KM.
\newblock Hyper-graph matching via reweighted random walks.
\newblock In: {CVPR} 2011; 2011. p. 1633--1640.

\bibitem{white_structure_1986}
White JG, Southgate E, Thomson JN, Brenner S.
\newblock The structure of the nervous system of the nematode {Caenorhabditis}
  elegans.
\newblock Philosophical Transactions of the Royal Society of London B,
  Biological Sciences. 1986;314(1165):1--340.
\newblock doi:{10.1098/rstb.1986.0056}.

\bibitem{sulston_embryonic_1983}
Sulston JE, Schierenberg E, White JG, Thomson JN.
\newblock The embryonic cell lineage of the nematode {Caenorhabditis} elegans.
\newblock Developmental Biology. 1983;100(1):64--119.
\newblock doi:{10.1016/0012-1606(83)90201-4}.

\bibitem{chisholm_genetics_2016}
Chisholm AD, Hutter H, Jin Y, Wadsworth WG.
\newblock The {Genetics} of {Axon} {Guidance} and {Axon} {Regeneration} in
  {Caenorhabditis} elegans.
\newblock Genetics. 2016;204(3):849--882.
\newblock doi:{10.1534/genetics.115.186262}.

\bibitem{rapti_perspective_2020}
Rapti G.
\newblock A perspective on {C}. elegans neurodevelopment: from early
  visionaries to a booming neuroscience research.
\newblock Journal of Neurogenetics. 2020;34(3-4):259--272.
\newblock doi:{10.1080/01677063.2020.1837799}.

\bibitem{christensen_untwisting_2015}
Christensen RP, Bokinsky A, Santella A, Wu Y, Marquina-Solis J, Guo M, et~al.
\newblock Untwisting the {Caenorhabditis} elegans embryo.
\newblock eLife. 2015;4:e10070.
\newblock doi:{10.7554/eLife.10070}.

\bibitem{leordeanu_spectral_2005}
Leordeanu M, Hebert M.
\newblock A spectral technique for correspondence problems using pairwise
  constraints.
\newblock In: Tenth {IEEE} {International} {Conference} on {Computer} {Vision}
  ({ICCV}'05) {Volume} 1. Beijing, China: IEEE; 2005. p. 1482--1489 Vol. 2.
\newblock Available from: \url{http://ieeexplore.ieee.org/document/1544893/}.

\bibitem{berg_shape_2005}
Berg AC, Berg TL, Malik J.
\newblock Shape matching and object recognition using low distortion
  correspondences.
\newblock In: 2005 {IEEE} {Computer} {Society} {Conference} on {Computer}
  {Vision} and {Pattern} {Recognition} ({CVPR}'05). vol.~1; 2005. p. 26--33
  vol. 1.

\bibitem{wen_multiple_2014}
Wen L, Li W, Yan J, Lei Z, Yi D, Li SZ.
\newblock Multiple {Target} {Tracking} {Based} on {Undirected} {Hierarchical}
  {Relation} {Hypergraph}.
\newblock In: 2014 {IEEE} {Conference} on {Computer} {Vision} and {Pattern}
  {Recognition}; 2014. p. 1282--1289.

\bibitem{kuhn_hungarian_1955}
Kuhn HW.
\newblock The {Hungarian} method for the assignment problem.
\newblock Naval Research Logistics Quarterly. 1955;2(1-2):83--97.
\newblock doi:{https://doi.org/10.1002/nav.3800020109}.

\bibitem{sahni_p-complete_1974}
Sahni S, Gonzales T.
\newblock P-complete problems and approximate solutions.
\newblock In: 15th {Annual} {Symposium} on {Switching} and {Automata} {Theory}
  (swat 1974). USA: IEEE; 1974. p. 28--32.
\newblock Available from: \url{http://ieeexplore.ieee.org/document/4569755/}.

\bibitem{pardalos_handbook_2013}
Pardalos P, Du DZ, Graham R.
\newblock Handbook of {Combinatorial} {Optimization}.
\newblock Springer Reference; 2013.

\bibitem{zaslavskiy_path_2009}
Zaslavskiy M, Bach F, Vert JP.
\newblock A {Path} {Following} {Algorithm} for the {Graph} {Matching}
  {Problem}.
\newblock IEEE Transactions on Pattern Analysis and Machine Intelligence.
  2009;31(12):2227--2242.
\newblock doi:{10.1109/TPAMI.2008.245}.

\bibitem{land_automatic_1960}
Land AH, Doig AG.
\newblock An {Automatic} {Method} of {Solving} {Discrete} {Programming}
  {Problems}.
\newblock Econometrica. 1960;28(3):497--520.
\newblock doi:{10.2307/1910129}.

\bibitem{little_algorithm_1963}
Little JDC, Murty KG, Sweeney DW, Karel C.
\newblock An {Algorithm} for the {Traveling} {Salesman} {Problem}.
\newblock Operations Research. 1963;11(6):972--989.
\newblock doi:{10.1287/opre.11.6.972}.

\bibitem{mcauliffe_medical_2001}
McAuliffe MJ, Lalonde FM, McGarry D, Gandler W, Csaky K, Trus BL.
\newblock Medical {Image} {Processing}, {Analysis} and {Visualization} in
  clinical research.
\newblock In: Proceedings 14th {IEEE} {Symposium} on {Computer}-{Based}
  {Medical} {Systems}. {CBMS} 2001; 2001. p. 381--386.

\bibitem{bao_automated_2006}
Bao Z, Murray JI, Boyle T, Ooi SL, Sandel MJ, Waterston RH.
\newblock Automated cell lineage tracing in {Caenorhabditis} elegans.
\newblock Proceedings of the National Academy of Sciences of the United States
  of America. 2006;103(8):2707--2712.
\newblock doi:{10.1073/pnas.0511111103}.

\bibitem{boyle_acetree_2006}
Boyle TJ, Bao Z, Murray JI, Araya CL, Waterston RH.
\newblock {AceTree}: a tool for visual analysis of {Caenorhabditis} elegans
  embryogenesis.
\newblock BMC Bioinformatics. 2006;7(1):275.
\newblock doi:{10.1186/1471-2105-7-275}.

\bibitem{santella_wormguides_2015}
Santella A, Catena R, Kovacevic I, Shah P, Yu Z, Marquina-Solis J, et~al.
\newblock {WormGUIDES}: an interactive single cell developmental atlas and tool
  for collaborative multidimensional data exploration.
\newblock BMC Bioinformatics. 2015;16(1):189.
\newblock doi:{10.1186/s12859-015-0627-8}.

\bibitem{mace_high-fidelity_2013}
Mace DL, Weisdepp P, Gevirtzman L, Boyle T, Waterston RH.
\newblock A {High}-{Fidelity} {Cell} {Lineage} {Tracing} {Method} for
  {Obtaining} {Systematic} {Spatiotemporal} {Gene} {Expression} {Patterns} in
  {Caenorhabditis} elegans.
\newblock G3: Genes, Genomes, Genetics. 2013;3(5):851--863.
\newblock doi:{10.1534/g3.113.005918}.

\bibitem{cao_establishment_2020}
Cao J, Guan G, Ho VWS, Wong MK, Chan LY, Tang C, et~al.
\newblock Establishment of a morphological atlas of the {Caenorhabditis}
  elegans embryo using deep-learning-based {4D} segmentation.
\newblock Nature Communications. 2020;11(1):6254.
\newblock doi:{10.1038/s41467-020-19863-x}.

\bibitem{wang_high-content_2019}
Wang S, Ochoa SD, Khaliullin RN, Gerson-Gurwitz A, Hendel JM, Zhao Z, et~al.
\newblock A high-content imaging approach to profile {C}. elegans embryonic
  development.
\newblock Development. 2019;146(7).
\newblock doi:{10.1242/dev.174029}.

\bibitem{wu_spatially_2013}
Wu Y, Wawrzusin P, Senseney J, Fischer RS, Christensen R, Santella A, et~al.
\newblock Spatially isotropic four-dimensional imaging with dual-view plane
  illumination microscopy.
\newblock Nature Biotechnology. 2013;31(11):1032--1038.
\newblock doi:{10.1038/nbt.2713}.

\bibitem{valdivia_analyzing_2021}
Valdivia P, Buono P, Plaisant C, Dufournaud N, Fekete JD.
\newblock Analyzing {Dynamic} {Hypergraphs} with {Parallel} {Aggregated}
  {Ordered} {Hypergraph} {Visualization}.
\newblock IEEE Transactions on Visualization and Computer Graphics.
  2021;27(1):1--13.
\newblock doi:{10.1109/TVCG.2019.2933196}.

\end{thebibliography}
\end{document}